\let\oldnl\nl
\newcommand{\nonl}{\renewcommand{\nl}{\let\nl\oldnl}}
\def\ep{\varepsilon}
\definecolor{cadmiumgreen}{rgb}{0.0, 0.42, 0.24}
\newcommand{\norm}[1]{\left\lVert#1\right\rVert}
\newcommand{\rom}[1]{\uppercase\expandafter{\romannumeral #1\relax}}
\DeclareMathOperator{\cond}{cond}
\newtheorem{theorem}{Theorem}
\newtheorem{lemma}{Lemma}
\newtheorem{proposition}{Proposition}
\DeclareMathOperator{\rank}{\mathrm{rank}}
\DeclareMathOperator*{\argmin}{\mathrm{argmin}}
\def\s{\varepsilon}
\def\t{\tau}
\def\ALGNAME{CondLR }
\title{Robust low-rank training via  approximate orthonormal constraints}
\author{%
  Dayana Savostianova \\
  Gran Sasso Science Institute\\
  67100 L’Aquila (Italy)\\
  \texttt{dayana.savostianova@gssi.it} \\
  \And
  Emanuele Zangrando \\
  Gran Sasso Science Institute\\
  67100 L’Aquila (Italy)\\
  \texttt{emanuele.zangrando@gssi.it} \\
  \AND
  Gianluca Ceruti \\
  EPF Lausanne \\
  1015 Lausanne (Switzerland) \\
  \texttt{gianluca.ceruti@epfl.ch} \\
  \And
  Francesco Tudisco \\
  Gran Sasso Science Institute \\
  67100 L’Aquila (Italy) \\
  \texttt{francesco.tudisco@gssi.it} \\
}
\begin{document}

\maketitle

\begin{abstract}
With the growth of model and data sizes, a broad effort has been made to design pruning techniques that reduce the resource demand of deep learning pipelines, while retaining model performance. In order to reduce both inference and training costs, a prominent line of work uses low-rank matrix factorizations to represent the network weights. Although able to retain accuracy,  we observe that low-rank methods tend to compromise model robustness against adversarial perturbations. By modeling robustness in terms of the condition number of the neural network, we argue that this loss of robustness is due to the exploding singular values of the low-rank weight matrices. Thus, we introduce a robust low-rank training algorithm that maintains the network's weights on the low-rank matrix manifold while  simultaneously enforcing approximate orthonormal constraints. The resulting model 
reduces both training and inference costs while ensuring well-conditioning and thus better adversarial robustness, without compromising model accuracy. This is shown by extensive numerical evidence and by our main approximation theorem that shows the computed robust low-rank network well-approximates the ideal full model, provided a highly performing low-rank sub-network exists.
\end{abstract}

\section{Introduction}
Deep learning and neural networks have achieved great success in a variety of applications in computer vision, signal processing, and scientific computing, to name a few. However, their robustness with respect to perturbations of the input data may considerably impact security and trustworthiness and poses a major drawback to their real-world application. Moreover, the memory and computational requirements for both training and inferring phases render them impractical in application settings with limited resources. While a broad literature on pruning methods and adversarial robustness has been developed to address these two issues in isolation, much less has been done to design neural networks that are both energy-saving and robust. Actually, in many approaches the two problems seem to compete against each other as most adversarial robustness improving-techniques  require even larger networks \cite{madry2018towards,zhang2019theoretically,Lee_Gan_trainer,Li_Dilation,madaan2020adversarial}  or computationally more demanding loss functions, and thus more expensive training phases \cite{parseval,hein2017formal,liu2018advbnn,tsiligkaridis2021_FWADV}. 

The limited work available so far on robust pruned networks is mostly focused on reducing memory and computational costs of the inference phase, while retaining adversarial robustness \cite{sehwag2020hydra,Jordao_Pruning_robustness,Ye_2019_ICCV,Liao_sparsity_robustness,Gui_Compression_robustness}. However, the inference phase amounts to only a very limited fraction of the cost of the whole deep learning pipeline, which is instead largely dominated by the training phase. Reducing both inference and training costs is a challenging but desirable goal, especially in view of a more accessible AI and its effective use on limited-resource and limited-connectivity devices such as drones or satellites. 

Some of the most effective techniques for the reduction of training costs so far have been based on low-rank weights parametrizations \cite{DLRT,wang2021pufferfish,khodak2021initialization}. These methods exploit the intrinsic low-rank structure of parameter matrices and large data matrices in general \cite{singh2021analytic,udell2019big,martin2021implicit,feng2022rank}. Thus, assuming a low-rank structure for the neural network's weights $W=USV^\top$,  the resulting training procedures only use the small individual factors $U,S,V$. This results in a training cost that scales linearly with the number of neurons, as opposed to a quadratic scaling required by training full-rank weights. Despite significantly reducing training parameters, these methods achieve accuracy comparable with the original full networks. However, their robustness with respect to adversarial perturbations has been largely unexplored so far. 

In this paper, we observe that the adversarial robustness of low-rank networks may actually deteriorate with respect to  the full baseline. By modeling the robustness of the network in terms of the neural network's condition number, we argue that this loss of robustness is due to the exploding condition number of the low-rank weight matrices, whose singular values grow very large in order to match the baseline accuracy and to compensate for the lack of parameters. Thus, to mitigate this growing instability, we design an algorithm that trains the network using only the low-rank factors $U,S,V$ while simultaneously ensuring the  condition number of the network remains small. To this end, we interpret  the loss optimization problem as a continuous-time gradient flow and  use  techniques from geometric integration theory on manifolds \cite{uschmajew2020geometric,Lubich_dlra,DLRT,ceruti2022unconventional} to derive three separate projected gradient flows for $U,S,V$, individually, which ensure  the condition number of the network remains bounded to a desired tolerance $1+\tau$, throughout the epochs. For a fixed small constant $\s>0$, this is done by bounding the singular values of the small rank matrices within a narrow band $[s-\s,s+\s]$ around a value $s$, chosen to best approximate the original singular values.

We provide several experimental evaluations on different architectures and datasets, where the robust low-rank networks are compared against a variety of baselines.
The results show that the proposed technique allows us to compute from scratch low-rank weights with bounded singular values, significantly reducing the memory demand and computational cost of training while at the same time retaining or  improving both the accuracy and the robust accuracy of the original model.  
On top of the experimental evidence, we provide a key approximation theorem that shows that if a high-performing low-rank network with bounded singular values exists, then our algorithm computes it up to a first-order approximation error.

This paper focuses on feed-forward neural networks. However,  our techniques and analysis apply straightforwardly to convolutional filters reshaped in matrix form, as done in e.g.\ \cite{DLRT,wang2021pufferfish,khodak2021initialization}. Other ways exist to promote orthogonality of convolutional filters, e.g.~\cite{singla2021skew,trockmanorthogonalizing,yu2022constructing}, which we do not consider in~this~work.

\section{Related work}
Neural networks'  robustness against adversarial perturbations has been extensively studied in the machine learning community. 
It is well-known that the adversarial robustness of a neural network is closely related to its Lipschitz continuity \cite{goodfellow_intr_properties,parseval,tsuzuku2018lipschitz,hein2017formal},  see also \Cref{sec:cond_number}. Accordingly, training neural networks with bounded Lipschitz constant is a widely employed strategy to address the problem. 
A variety of works studied Lipschitz architectures  \cite{tsuzuku2018lipschitz,trockmanorthogonalizing,leino2021globally,singla2021skew}, and a number of certified  robustness guarantees have been proposed \cite{hein2017formal,singlaimproved,meunier2022dynamical}. While scaling each layer to impose 1-Lipschitz constraints is a possibility, this approach may lead to vanishing gradients and it is known that a more effective way to reduce the Lipschitz constant and increase robustness is obtained by promoting orthogonality on each layer  \cite{anil2019sorting,parseval}.  On top of robustness, small Lipschitz constants and orthogonal layers are known to lead to improved generalization bounds \cite{bartlett2017spectrally,longgeneralization} and more interpretable gradients \cite{tsiprasrobustness}. Orthogonality was also
shown to improve signal propagation in (very) deep  networks \cite{xiao2018dynamical,pennington2017resurrecting}.

A variety of methods to integrate orthogonal constraints in deep neural networks have been developed over the years. Notable example approaches include methods based on regularization and landing \cite{ablin2022fast,parseval},  cheap parametrizations of the orthogonal group \cite{li2020efficient,arjovsky2016unitary,lezcano2019cheap,maduranga2019complex,massart2022orthogonal}, Riemannian and projected gradient descent schemes \cite{bansal2018can, absil2012projection,absil2008optimization}. 

In parallel to the development of methods to promote orthogonality, an active line of research has grown to develop effective  training strategies to enforce low-rank weights. Unlike sparsity-promoting pruning strategies that primarily aim at reducing the parameters required for inference \cite{pruning1,pruning2,pruning3,pruning4,heuristic1}, low-rank neural network models are designed to train directly on the low-parametric manifold of low-rank matrices and are particularly effective to reduce the number of parameters required by both inference and training phases. 
Similar to orthogonal training, methods for low-rank training include methods based on regularization \cite{idelbayev2020low,khodak2021initialization}, as well as methods based on efficient parametrizations of the low-rank manifold using the SVD or the polar decomposition \cite{wang2021pufferfish,SVD_prune}, and Riemannian optimization-based training models \cite{DLRT,shalit2010online}.

By combining low-rank training with approximate orthogonal constraints, in this work we propose a strategy that simultaneously enforces robustness while only requiring a reduced percentage of the network's parameters during training. The method is based on a gradient flow differential formulation of the training problem, and the use of geometric integration theory to derive the governing equations of the low-rank factors. With this formulation, we are able to reduce the sensitivity of the network during training at almost no cost, yielding well-conditioned low-rank neural networks. Our experimental findings are supported by an approximation theorem that shows that, if the ideal full network can be approximated by a low-rank one, then our method computes a good approximation. This is well-aligned with recent work that shows the existence of high-performing low-rank nets in e.g.\ deep linear models \cite{martin2021implicit,feng2022rank,arora2019implicit,huh2021low}. 
Moreover, as orthogonality helps in training really deep networks,  low-rank orthogonal models may be used to mitigate the effect of increased effective depth when training low-rank networks \cite{Roberts_2022}.

\section{The condition number of a neural network}\label{sec:cond_number}
The adversarial robustness of a neural network model $f$ can be measured by the worst-case sensitivity of $f$ with respect to small perturbations of the input data $x$. In an absolute sense, this boils down to measuring the best global and local Lipschitz constant of $f$ with respect to suitable distances, as discussed in a variety of papers \cite{goodfellow_intr_properties,hein2017formal,cohen2019certified,parseval}. However, as the model and the data may assume arbitrary large and arbitrary small values in general, a relative measure of the sensitivity of $f$ may be more informative. In other words, if we assume a perturbation $\delta$ of small size as compared to $x$, we would like to quantify the largest relative change in $f(x+\delta)$, as compared to $f(x)$. This is a well-known problem of conditioning, as we review next, and naturally leads to the concept of condition number of a neural network.

In the linear setting, the condition number of a matrix is a widely adopted relative measure of the worst-case sensitivity of linear problems with respect to noise in the data. For a matrix $A$ and the matrix operator norm $\|A\| = \sup_{x\neq 0}  {\| Ax\|}/{\|x\|}$, the condition number of $A$ is defined as $\cond(A)=\|A\|\|A^{+}\|$, where $A^+$ denotes the pseudo-inverse of $A$. Note that it is immediate to verify that $\cond(A)\geq 1$. Now,  if for example $u$ and $u_\ep$ are the solutions to the linear system $Au=b$, when $A$ and $b$ are exact data or when they are perturbed with  noise $\delta_A$, $\delta_b$ of relative norm $\|\delta_A\|/\|A\|\leq \ep$ and $\|\delta_b\|/\|b\|\leq \ep$, respectively, then the following  relative error bound holds
\[
\frac{\| u-u_\ep \|}{\|u\|} \lesssim \cond(A) \, \ep\, .
\]
Thus, small perturbations in the data $A,b$ imply small alterations in the solution if and only if $A$ is well conditioned, i.e.\ $\cond(A)$ is close to one. 

As in the linear case, it is possible to define the concept of condition number for general functions $f$, \cite{HIGHAM1995193,rice_conditioning}. Let us start by defining the relative error ratio of a function $f: \mathbb{R}^d \rightarrow \mathbb{R}^m$ in the point~$x$:
\begin{equation}\label{eq:relaltive_error}
R(f,x;\delta) = \frac{\norm{f(x+\delta)-f(x)}}{\norm{f(x)}}\Bigr/\frac{\norm{\delta}}{\norm{x}}  
\end{equation}
In order to take into account the worst-case scenario, the \textit{local} condition number of $f$ at $x$ is defined by taking the sup of \eqref{eq:relaltive_error} over  all perturbations of relative size $\ep$, i.e.\ 
 such that $\norm{\delta}\leq \varepsilon\norm{x}$, in the limit of small $\ep$. Namely, 
 \[\cond(f;x)=\lim_{\ep\downarrow 0}\sup_{\delta\ne0:\|\delta\|\leq \ep \|x\|} R(f,x;\delta).
 \]
This quantity is a local measure of the ``infinitesimal'' conditioning of $f$ around the point $x$. In fact, a direct computation reveals that
\begin{equation}\label{eq:error_bound_cond}
    \frac{\norm{f(x+\delta)-f(x)}}{\norm{f(x)}}\lesssim  \cond(f;x) \, \ep\, ,
\end{equation}
as long as $\norm{\delta}\leq \ep \norm{x}$. Thus, $\cond(f;x)$ provides a form of relative local Lipschitz constant for $f$ which in particular shows that, if $\|\delta\|/\|x\|$ is smaller than $\cond(f;x)^{-1}$, we expect  limited change in $f$ when $x$ is perturbed with $\delta$. A similar conclusion is obtained using an absolute local Lipschitz constant in e.g. \cite{hein2017formal}.  
Similarly to the absolute case, a global relative Lipschitz constant can be obtained by looking at the worst-case over $x$, setting 
\[\cond(f)=\sup_{x\in\mathcal X}\cond(f;x).
\]
Clearly, the same bound \eqref{eq:error_bound_cond} holds for $\cond(f)$.
Note that this effectively generalizes the linear case, as when $f(x) = Ax$ we have $\cond(f) = \cond(f, x) = \cond(A)$.

When $f$ is a neural network, $\cond(f)$ is a function of the network's weights and robustness may be enforced by reducing $\cond(f)$ while training. In fact, $\cond(f)$ is the relative equivalent of the network's Lipschitz constant and thus standard Lipschitz-based robustness certificates \cite{li2019preventing,tsuzuku2018lipschitz,hein2017formal} can be recast in terms of $\cond(f)$.   However, for general functions $f$ and general norms $\|\cdot\|$, $\cond(f)$ may be (very) expensive to compute, it may be non-differentiable, and $\cond(f)>1$ can hold \cite{HIGHAM1995193}. Fortunately, for feed-forward neural networks, it holds (proof and additional details moved to \Cref{sec:proof_bound_cond_f} in the supplementary material)
\begin{proposition}\label{thm:bound_cond_f}
 Let $\mathcal X$ be the feature space and let $f(x)=z_{L+1}$ be a network with $L$ linear layers  $z_{i+1}= \sigma_i(W_iz_i)$, $i=1,\dots, L$. Then, 
 \begin{equation*}\label{eq:conditioning}
         \cond(f)= \sup_{x\in\mathcal X\setminus \{0\}}\cond(f;x) \leq  \Big(\prod_{i=1}^L \sup_{x\in\mathcal X\setminus \{0\}}    \cond(\sigma_i;x) \Big) \Big(\prod_{i=1}^L \cond(W_i)\Big)\, .
    \end{equation*}
    In particular, for typical $\mathcal X$ and typical choices of $\sigma_i$, including $\sigma_i\in\{\mathrm{leakyReLU}$, $\mathrm{sigmoid}$, $\mathrm{tanh}$, $\mathrm{hardtanh}$,  
 $\mathrm{softplus}$, $\mathrm{siLU}\}$, we have \begin{equation*}\textstyle{\sup_{x\in\mathcal X\setminus \{0\}}    \cond(\sigma_i;x) \leq C<+\infty}
    \end{equation*}
    for a positive constant  $C>0$ that depends only on the activation function $\sigma_i$. 
\end{proposition}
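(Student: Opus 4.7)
The plan is to establish a chain-rule inequality for the local condition number and then apply it inductively to the layer-by-layer structure of $f$. Passing to the limit $\s\downarrow 0$ in \eqref{eq:relaltive_error} gives the closed form $\cond(g;x) = \|Dg(x)\|\,\norm{x}/\norm{g(x)}$ for any $g$ differentiable at $x$ with $g(x)\neq 0$, where $\|\cdot\|$ denotes the induced operator norm. Using this expression together with the chain rule $D(g\circ h)(x)=Dg(h(x))\,Dh(x)$ and submultiplicativity of the operator norm, I would first prove the composition inequality
\begin{equation*}
   \cond(g\circ h;x) \;\leq\; \cond(g;h(x))\,\cond(h;x),
\end{equation*}
by multiplying and dividing by $\norm{h(x)}$.

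Writing $f=\sigma_L\circ\phi_{W_L}\circ\cdots\circ\sigma_1\circ\phi_{W_1}$ with $\phi_W(z)=Wz$, induction on $L$ yields
\begin{equation*}
   \cond(f;x) \;\leq\; \Big(\prod_{i=1}^L \cond(\sigma_i;W_i z_i)\Big)\Big(\prod_{i=1}^L \cond(\phi_{W_i};z_i)\Big).
\end{equation*}
For the linear factor, the closed form gives $\cond(\phi_{W_i};z)=\|W_i\|\,\norm{z}/\norm{W_iz}$. Restricting $z$ to $(\ker W_i)^{\perp}$ — which is where the pseudo-inverse acts and which contains all inputs relevant to the next layer — the singular value inequality $\norm{W_iz}\geq \norm{z}/\|W_i^{+}\|$ yields $\cond(\phi_{W_i};z)\leq \cond(W_i)$. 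Taking the supremum over $x\in\mathcal X$ and bounding each factor by its worst case produces the desired product bound.

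The main obstacle, and the bulk of the work, is the uniform bound $\sup_{x\in\mathcal X\setminus\{0\}}\cond(\sigma_i;x)\leq C<+\infty$ for each activation in the listed family. Since the activations act componentwise, I would reduce the problem to the scalar case, where $\cond(\sigma;x)$ is controlled by the ratio $|\sigma'(t)|\cdot|t|/|\sigma(t)|$ over $t\in\mathcal X$. Three regimes need to be handled separately: (i) near $t=0$, for activations with $\sigma(0)=0$ (leakyReLU, tanh, hardtanh, siLU) one uses $\sigma(t)=\sigma'(0)t+O(t^2)$ with $\sigma'(0)\neq 0$ to keep $|\sigma(t)|/|t|$ bounded below; (ii) for sigmoid and softplus, $\sigma(0)>0$ so $|\sigma(t)|$ is bounded away from zero on any bounded feature domain; (iii) for large $|t|$, $|\sigma'(t)|$ is bounded while $|\sigma(t)|$ either saturates (sigmoid, tanh, hardtanh) or grows at least linearly (softplus, siLU, leakyReLU), so the ratio stays bounded. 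The delicate point is verifying that these three regimes glue into a single finite constant for each activation on the relevant feature space $\mathcal X$; notably, standard ReLU is excluded precisely because it fails regime (i) on inputs of mixed sign, which explains why only leakyReLU appears in the statement.
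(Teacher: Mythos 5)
Your overall architecture matches the paper's: a composition inequality $\cond(g\circ h;x)\le\cond(g;h(x))\,\cond(h;x)$ applied inductively to the layer structure, followed by a scalar analysis of each activation. The one structural difference is that you derive the composition inequality from the Jacobian formula $\cond(g;x)=\|Dg(x)\|\,\|x\|/\|g(x)\|$ and the chain rule, whereas the paper proves it directly from the increment-ratio definition (its Lemma~2), assuming only continuity. The paper's route is preferable here: leakyReLU and hardtanh are not differentiable everywhere, so your closed form requires Clarke's generalized gradient (which the paper invokes explicitly) and the chain-rule step needs justification at kink points; the definitional proof sidesteps this entirely. Your treatment of the linear factor is fine in substance, though the justification for restricting to $(\ker W_i)^{\perp}$ (``contains all inputs relevant to the next layer'') is not right --- the outputs of layer $i-1$ need not lie in that subspace; both you and the paper are implicitly assuming trivial kernels (or a domain restriction) for $\cond(W_i)=\|W_i\|\|W_i^{+}\|$ to dominate $\sup_z \|W_i\|\|z\|/\|W_iz\|$.

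The concrete gap is in your regime (iii). For sigmoid, softplus and siLU the ratio $|\sigma'(t)|\,|t|/|\sigma(t)|$ does \emph{not} stay bounded as $t\to-\infty$: for the sigmoid, $\cond(\sigma;t)=|t|(1-\sigma(t))\sim|t|$, and for softplus and siLU both $\sigma(t)$ and $\sigma'(t)$ decay like $e^{t}$, the exponentials cancel, and the ratio again grows like $|t|$. ``Saturates'' at the value $0$ is precisely the bad case, since the denominator vanishes at the same rate as the numerator's derivative factor. This is why the paper's verification of these three activations is carried out only for $t\ge 0$, and why the statement hedges with ``typical $\mathcal X$.'' Your argument is salvageable by restricting the pre-activation inputs to a nonnegative or bounded set, as the paper does, but as written regime (iii) asserts a false bound for half of the listed activations.
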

Note that for entrywise nonlinearities $\sigma$, the condition number $\cond(\sigma;x)$ can be computed straightforwardly. In fact, when $\sigma$ is Lipschitz, the problem can be reduced to a one-dimensional function, and it follows directly from its definition that (see also \cite{trefethen97})
\[
\cond(f;x) = \sup_{\nu_x\in \partial \sigma(x)}|\nu_x| |x||\sigma(x)|^{-1}, \qquad x\in \mathbb R
\]
where $\partial \sigma(x)$ denotes  Clarke's generalized gradient \cite{clarke1990optimization} of $\sigma$ at the point $x$. Thus, for example, if $\sigma$ is  \emph{LeakyRelu} with slope $\alpha$, we have $\cond(\sigma) = 1$; if $\sigma$ is the \emph{logistic sigmoid} $(1+e^{-x})^{-1}$ and the feature space $\mathcal X$ contains only nonnegative points, then $\cond(\sigma)\leq \sup_{x\geq 0}|x|e^{-x}(1+e^{-x})^{-1}\leq 1/e$. 

From \Cref{thm:bound_cond_f} we see that when $f$ is a feed-forward network, to reduce the condition number of $f$ it is enough to reduce the conditioning of all its weights.  
When $\|\cdot\|=\|\cdot\|_2$ is the Euclidean $L^2$ norm, we have $\cond_2(W) = s_{\max}(W)/s_{\min}(W)$, the ratio between the largest and the smallest singular value of $W$. This implies that orthogonal weight matrices, for example,  are optimally conditioned with respect to the  $L^2$ metric. Thus, a notable and well-known consequence of \Cref{thm:bound_cond_f} is that 
 imposing orthogonality constraints on $W$ improves the robustness of the network \cite{parseval,lezcano2019cheap,jia2017improving,massart2022orthogonal}.

 While orthogonal constraints are widely studied in the literature,  orthogonal matrices are not the only optimally conditioned ones. In fact, $\cond_2(W)=1$ for any   $W$ with constant singular values. 
 In the next section, we will use this observation to design a low-rank and low-cost algorithm that trains well-conditioned networks by ensuring $\cond_2(W)\leq 1+\tau$,  for all layers $W$ and a desired tolerance~$\tau>0$.

\section{Robust low-rank training}
\subsection{Instability of low-rank networks}\label{sec:instability}
Low-rank methods are popular strategies to reduce the memory storage and the computational cost of both training and inference phases of deep learning models \cite{DLRT,wang2021pufferfish,khodak2021initialization}. Leveraging the intrinsic low-rank structure of parameter matrices \cite{arora2019implicit,singh2021analytic,martin2021implicit,feng2022rank}, these methods train a subnetwork with weight matrices parametrized as $W=USV^\top$, for ``tall and skinny'' matrices  $U,V$ with $r$ columns, and a small $r\times r$ matrix $S$. Training low-rank weight matrices has proven to effectively reduce training parameters while retaining performance comparable to those of the full model. 
However, while a variety of contributions have analyzed and refined low-rank methods to match the full model's accuracy, the robust accuracy of low-rank models has been largely overlooked in the literature. 

Here we observe that reducing the rank of the layer may actually deteriorate the network's robustness. We argue that this phenomenon
is imputable to the exploding condition number of the network. In \Cref{fig:exploding_cond_nums} we plot the evolution of the condition number $\cond_2$ for the four internal layers of LeNet5  during training using different low-rank training strategies and compare them with the full model. While the condition number of the full model grows moderately with the iteration count, the condition number of low-rank layers blows up drastically. This singular value instability leads to poor robustness performance of the methods, as observed in the experimental evaluation of \Cref{sec:experiments}.

In the following, we design a low-rank training model that allows imposing simple yet effective training constraints, bounding the condition number of the trained network to a desired tolerance $1+\tau$, and improving the network robustness without affecting training nor inference costs.

\begin{figure}[t]
\includegraphics[width=.246\linewidth]{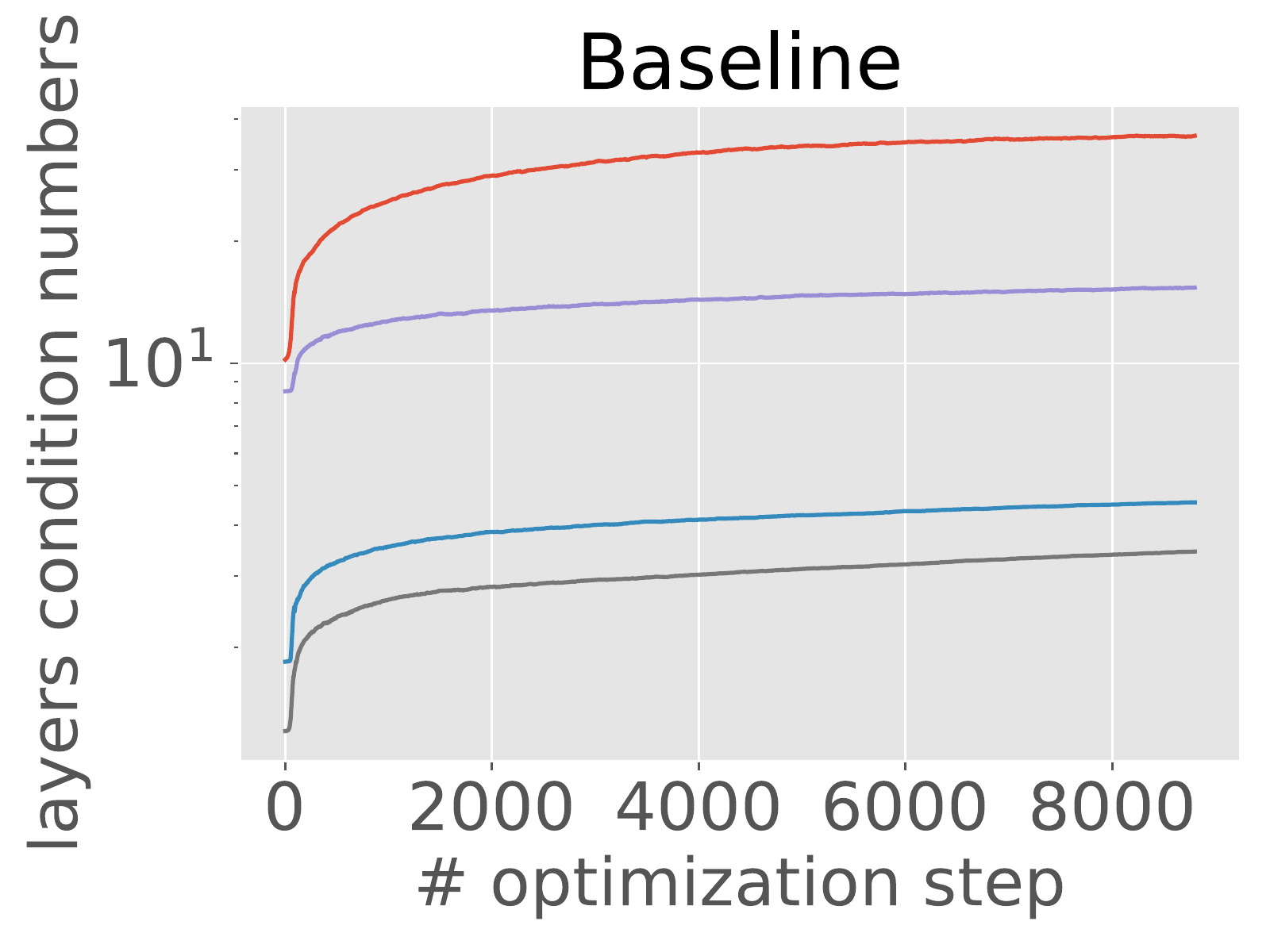}
\includegraphics[width=.246\linewidth]{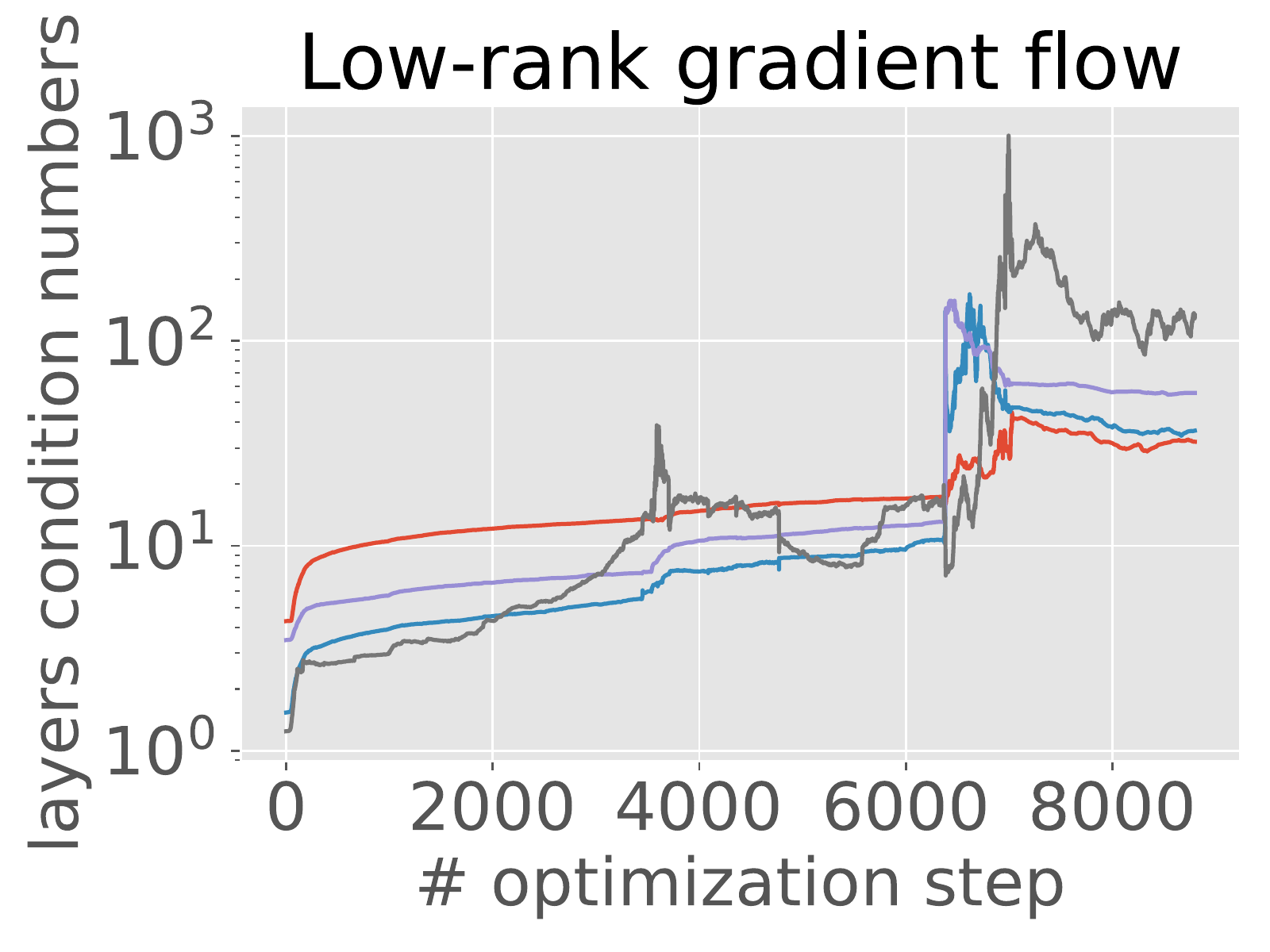}
\includegraphics[width=.246\linewidth]{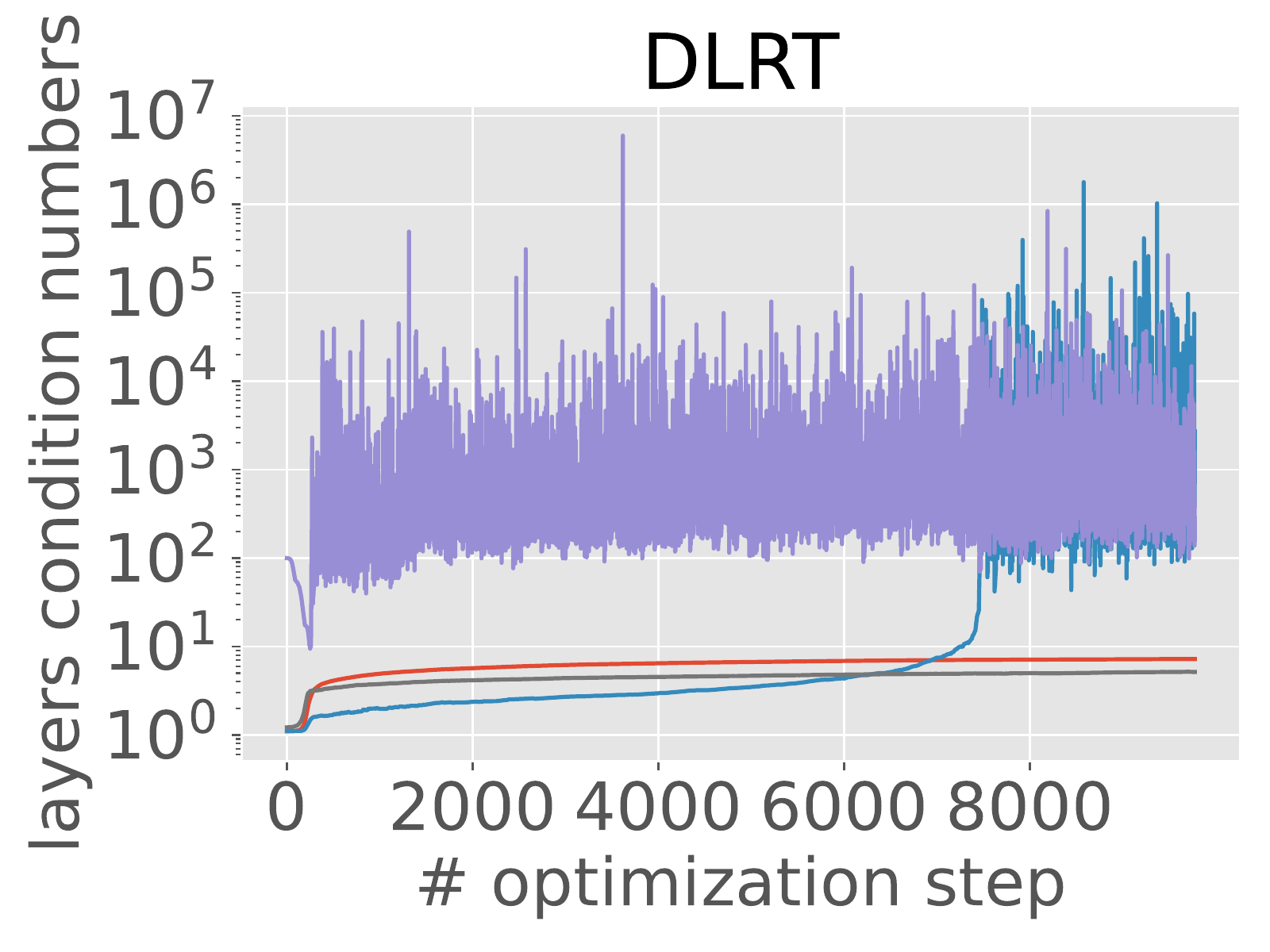}
\includegraphics[width=.246\linewidth]{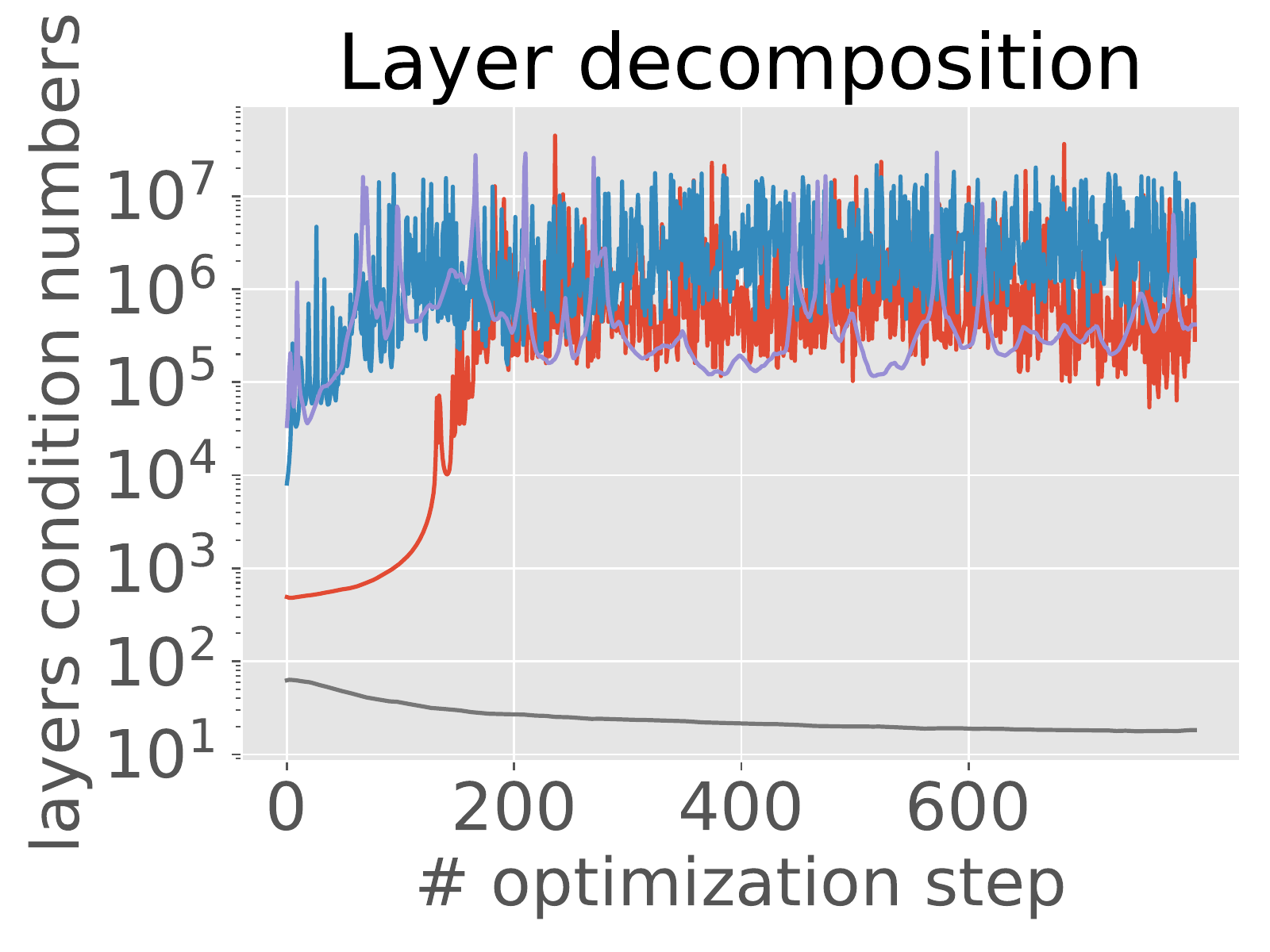}
\caption{Evolution of layers' condition numbers during training for LeNet5 on MNIST. From left to right: standard full-rank baseline model; \cite{wang2021pufferfish} vanilla low-rank training; \cite{DLRT} dynamical low-rank training based on gradient flow;  \cite{SVD_prune} low-rank training through regularization. All low-rank training strategies are set to $80\%$ compression ratio (percentage of removed parameters with respect to the full baseline model).}
\label{fig:exploding_cond_nums}
\end{figure}

\subsection{Low-rank gradient flow with bounded singular values}
Let $W\in \mathbb R^{n\times m}$ be the weight matrix of a linear layer within $f$. For an integer $r\leq \min\{m,n\}$ let $\mathcal M_r=\{W:\rank(W)=r\}$ be the manifold of rank-$r$ matrices which we parametrize as 
\[
\mathcal M_r = \big\{USV^\top : U\in \mathbb R^{n\times r}, V\in \mathbb R^{m\times r} \text{ with orthonormal columns}, \, S\in \mathbb R^{r\times r} \text{ invertible} \big\}.
\]
Obviously, the singular values of $W=USV^\top \in \mathcal M_r$ coincide the singular values of  $S$. 
For $s,\s$ such that $0<\s<s$, define $\Sigma_s(\s)$ as the set of matrices with singular values in the interval $[s-\s,s+\s]$. Note that $\Sigma_s(0)$ is a Riemannian manifold obtained essentially  by an $s$ scaling of the standard Stiefel manifold and any $A\in \Sigma_s(0)$ is optimally conditioned, i.e.\ $\cond_2(A)=1$. Thus, $\s$ can be interpreted as an approximation parameter that controls how close $\Sigma_s(\s)$ is to the ``optimal''  manifold  $\Sigma_s(0)$. To enhance the network robustness, in the following we will constrain the parameter weight matrix  $S$  to $\Sigma_s(\s)$. With this constraint, we get $\cond_2(W) \leq (s-\s)^{-1}(s+\s) = 1 + \t$, with $\t = 2(s-\s)^{-1}\s$, so that the tolerance $\tau$ on the network's conditioning can be tuned by suitably choosing the approximation parameter $\s$.

Given the loss function $\mathcal L$, we are interested in the constrained optimization problem 
\begin{equation}\label{eq:minimization_problem}
     \min \mathcal L \qquad \text{s.t.}\qquad W=USV^\top\in \mathcal M_{r} \text{ and } S\in \Sigma_{s}(\s), \text{ for all layers }W\, .
\end{equation}
To approach \eqref{eq:minimization_problem}, we use standard arguments from geometric integration theory \cite{uschmajew2020geometric, Lubich_dlra} to design a training scheme that updates only the factors $U,S,V$ and the gradient of $\mathcal L$ with respect to $U,S,V$, without ever forming the full weights nor the full gradients. To this end, following  \cite{DLRT}, we first recast the optimization of $\mathcal L$ with respect to each layer $W$ as a  
continuous-time gradient flow 
\begin{equation}\label{eq:full_gradient_flow}
    \dot W(t) = -\nabla_W\mathcal L(W(t)),
\end{equation}
where ``dot'' denotes the time derivative and where we write $\mathcal L$ as a function of $W$ only, for brevity. 
Along the solution of the differential equation above, the loss decreases and a stationary point is approached as $t\to\infty$. 
Now, if we assume $W\in\mathcal M_r$, then $\dot W \in T_W\mathcal M_r$, the tangent space of $\mathcal M_r$ at the point $W$. Thus, to ensure the whole trajectory $W(t)\in \mathcal M_r$, we can consider the projected gradient flow $\dot W(t) = -P_{W(t)}\nabla_W\mathcal L(W(t))$, where $P_W$ denotes the  orthogonal projection (in the ambient space of matrices)  onto $T_W\mathcal M_r$. Next, we notice that the projection $P_{W}\nabla_W\mathcal L$ can be defined by imposing orthogonality with respect to any point $Y\in T_W\mathcal M_r$, namely  
\[
\langle P_{W}\nabla_W\mathcal L-\nabla_W \mathcal L, Y\rangle =0\qquad \text{for all }Y\in T_W\mathcal M_r
\] 
where  $\langle \cdot,\cdot\rangle$ is the Frobenius inner product. As discussed in e.g.~\cite{Lubich_dlra,DLRT},  the above equations combined with the well-known representation of $T_W\mathcal M_r$ yield a system of three gradient flow equations for the individual factors
\begin{equation}\label{eq:system_odes_USV}
\begin{cases}
\dot{U} = -G_1(U), \quad & G_1(U) = P_{U}^{\bot}\nabla_{U}\mathcal{L}(USV^\top) (SS^{\top})^{-1} \\
\dot{V} = -G_2(V),\quad & G_2(V) = P_{V}^{\perp}\nabla_{V}\mathcal{L}(USV^\top)(S^{\top}S)^{-\top} \\
 \dot{S} = -G_3(S),\quad & G_3(S) = \nabla_{S}\mathcal{L}(USV^\top) 
\end{cases}
\end{equation}
where $P_U^\bot = (I-UU^\top)$ and $P_V^{\bot}=(I-VV^\top)$ are the projection operators onto the space orthogonal to the span of $U$ and $V$, respectively. 

Based on the  system of gradient flows above, we propose a training scheme that at each  iteration and for each layer parametrized by the tuple $\{U,S,V\}$ proceeds as follows:
\begin{enumerate}[leftmargin=*,noitemsep,topsep=0pt]
    \item  update $U$ and $V$ by numerically integrating the gradient flows $\dot U = -G_1(U)$ and $\dot V=-G_2(V)$
    \item project the resulting $U,V$ onto the Stiefel manifold of matrices with $r$ orthonormal columns
    \item update the $r\times r$ weight $S$ by integrating $\dot S = -G_3(S)$ 
    \item for a fixed robustness tolerance $\t$, project the computed $S$ onto $\Sigma_{s}(\s)$, choosing $s$ and $\s$ so that 
    \begin{itemize}
    \item $s$ is the best constant approximation to $S^\top S$, i.e.\ $s=\argmin_{\alpha}\|S^\top S-\alpha^2 I\|_F$
    \item $\s$ is such that the $\cond_2$ of the projection of $S$ does not exceed $1+\tau$
    \end{itemize}
\end{enumerate}

Note that the  coefficients $s$, $\s$ at point $4$ 
can be obtained explicitly by setting $s=\sqrt{\sum_{j=1}^r s_j(S)^2/r}$, the second moment of the singular values $s_j(S)$ of $S$,  and $\s = \t s / (2+\t)$. 
Note also that, in the differential equations for $U,S,V$  in \eqref{eq:system_odes_USV}, the four steps above can be implemented in parallel for each of the three variables. The detailed pseudocode of the training scheme is presented in \Cref{alg:main_algorithm}. We conclude with several remarks about its implementation.

\begin{algorithm}[t]
   \caption{Pseudocode of robust well-Conditioned Low-Rank (\textbf{\ALGNAME}) training scheme}
   \label{alg:main_algorithm}
   \SetKwFor{Block}{\vspace{-1em}}{}{}
   \KwIn{Chosen compression rate, i.e.\ for each layer $W$ choose a rank $r$;\\ 
   $\qquad$ $\;\;$ Initial layers' weights parametrized as $W=USV^\top$, with $S\sim r\times r$;\\
   $\qquad$ $\;\;$ Second singular value moment of $S$,  $s = \sqrt{\sum_{k}s_k(S)^2/r}$\\
   $\qquad$ $\;\;$ Conditioning tolerance $\t>0$
   }
   \textbf{for} each iteration and each layer \textbf{do} \, \textit{(each block in parallel)}
   
   \nonl\Block{}{
   $U\gets$ one optimization step with gradient $G_1$ and initial point $U$ \label{line:opt_step_U}
   
   $U\gets$ project $U$ on Stiefel manifold with $r$ orthonormal columns  \label{line:proj_U}
   }
   
   \nonl\Block{}{$V\gets$ one optimization step with gradient $G_2$ and initial point $V$ \label{line:opt_step_V}
   
   $V\gets$ project $V$ on Stiefel manifold with $r$ orthonormal columns \label{line:proj_V}
   }
   
   \nonl\Block{}{$S\gets$ one optimization step with gradient $G_3$ and initial point $S$ \label{line:opt_step_S}
   
   $s\gets \sqrt{\sum_{k}s_k(S)^2/r}$,  squareroot of second moment of the singular values of $S$ 
   
   $\s \gets \t s / (2+\t)$ 
   
   $S \gets$ project $S$ onto $\Sigma_{s}(\s)$  \label{line:proj_S}
   }
\end{algorithm}

\subsection*{Remarks, implementation details, and limitations}

Each step of \Cref{alg:main_algorithm} requires three optimization steps at lines \ref{line:opt_step_U}, \ref{line:opt_step_V}, \ref{line:opt_step_S}. These steps can be implemented using standard first-order optimizers such as SGD with momentum or ADAM.  Standard techniques can be used to project onto the Stiefel manifold at lines \ref{line:proj_U} and \ref{line:proj_V} of \Cref{alg:main_algorithm}, see e.g.\ \cite{absil2015low,uschmajew2020geometric}. Here, we use the QR decomposition. As for the projection onto $\Sigma_s(\s)$ at line \ref{line:proj_S}, we compute 
the SVD of the small factor $S$ and set to $s+\s$ or $s-\s$ the singular values that fall outside the interval $[s-\s,s+\s]$.  Note that, when $\tau=0$, i.e.\ when we require perfect conditioning for the layer weight $W=USV^\top$, then the SVD of $S$ can be replaced by a QR step or any other  Stiefel manifold projection. Indeed, we can equivalently set  $s=\sqrt{\mathrm{trace}(S^\top S)/r}$, and then project onto $\Sigma_s(0)$ by rescaling by a factor $s$ the projection of $S$ onto the Stiefel manifold. 
In this case, the system \eqref{eq:system_odes_USV} further simplifies, as we can replace $(SS^\top)^{-1}$ and $(S^\top S)^{-\top}$ with the scalar $1/s^2$.

Overall, the compressed low-rank network has  $r(n+m+r)$ parameters per each layer, where $n$ and $m$ are the number of input and output neurons. Thus, choosing $r$ so that $1- r(n+m+r)/(nm)=\alpha$ can yield a desired compression rate $0<\alpha<1$ on the number of network parameters, i.e.\ the number of parameters one eliminates with respect to the full baseline. For example, in  our experiments we will choose $r$ so that $\alpha=0.5$ or $\alpha=0.8$. 

\textbf{Limitations.} As the rank parameter $r$ has to be chosen a-priori for each layer of the network, a limitation of the proposed approach is the potential need for fine-tuning such parameter, even though the proposed analysis in \Cref{table:main} shows competitive performance for both $50\%$ and $80\%$ compression rates. Also, if the layer size $n\times m$ is not large enough, the compression ratio $1- r(n+m+r)/(nm)$ might be limited. Thus the method works well only for wide-enough networks. Finally, a standard way to obtain better adversarial performance would be to combine the proposed conditioning-based robustness with adversarial training strategies \cite{ding2022snn,wu2021gradient,terjek2020adversarial}. However, the cost of producing adversarial examples during training is not negligible, especially when based on multi-step attacks, and thus the way to incorporate adversarial training without affecting the benefits obtained with low-rank compression is not straightforward.

\subsection{Approximation guarantees}
Optimization methods  over the manifold of low-rank matrices are well-known to be affected by the stiff intrinsic geometry of the constraint manifold which has very high curvature around points where  $W\in \mathcal M_r$ is almost singular \cite{absil2015low,uschmajew2020geometric,Lubich_dlra}. This implies that even very small changes in $W$ may yield very different tangent spaces, and thus different training trajectories, as shown by the result below: 
\begin{lemma}[Curvature bound,  Lemma 4.2 \cite{Lubich_dlra}]
For  $W\in \mathcal{M}_{r}$ let $ s_{\min}(W)>0$ be its smallest singular value. For any $W'\in \mathcal M_r$ arbitrarily close to $W$ and any matrix $B$, it holds
\[
\|P_WB-P_{W'}B\|_F \leq C\,  s_{\min}(W)^{-1}\|W-W'\|_F\, ,
\]
where $C>0$ depends only on $B$.
\end{lemma}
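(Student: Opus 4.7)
The plan is to combine the explicit closed form of the projection onto $T_W \mathcal M_r$ with a Wedin-type perturbation bound for the top-$r$ singular subspaces. Writing $W = USV^\top \in \mathcal M_r$ with $U, V$ having orthonormal columns, the tangent space admits the characterization $T_W \mathcal M_r = \{Z : (I - UU^\top)Z(I - VV^\top) = 0\}$, and the orthogonal (Frobenius) projection has the closed form
\[
P_W B \;=\; UU^\top B + B\, VV^\top - UU^\top B\, VV^\top.
\]
First I would write the corresponding expression for $W' = U' S' V'^\top$, telescope the two trilinear terms by adding and subtracting $U'U'^\top B\, VV^\top$, and rearrange to obtain
\[
P_W B - P_{W'} B \;=\; (UU^\top - U'U'^\top)\, B\, (I - VV^\top) \;+\; (I - U'U'^\top)\, B\, (VV^\top - V'V'^\top).
\]
Taking Frobenius norms and using the standard inequality $\|A X D\|_F \leq \|A\|_2 \|X\|_F \|D\|_2$ together with $\|I - VV^\top\|_2,\, \|I - U'U'^\top\|_2 \leq 1$, this reduces the claim to controlling the two subspace perturbations $\|UU^\top - U'U'^\top\|_2$ and $\|VV^\top - V'V'^\top\|_2$ in terms of $\|W - W'\|_F$.

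The key analytic step is to invoke Wedin's $\sin\Theta$ theorem. Since both $W$ and $W'$ live in $\mathcal M_r$, their $(r+1)$-st singular values vanish and their $r$-th singular values are bounded below by $s_{\min}(W)$ up to a correction of order $\|W - W'\|_F$ that can be absorbed once $W'$ is taken sufficiently close to $W$. The relevant spectral gap for Wedin is therefore $s_{\min}(W)$, yielding
\[
\max\bigl\{\|UU^\top - U'U'^\top\|_2,\; \|VV^\top - V'V'^\top\|_2\bigr\} \;\leq\; \frac{c\, \|W - W'\|_F}{s_{\min}(W)}
\]
for an absolute constant $c$. Substituting this into the previous inequality gives the statement with a constant that depends only on $B$ (e.g., $C = 2c\,\|B\|_F$, or $2c\,\|B\|_2$ if one uses the mixed-norm version $\|AX D\|_F \leq \|A\|_F \|X\|_2 \|D\|_F$ on the middle factor).

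The main obstacle is the careful application of Wedin's theorem near the boundary of $\mathcal M_r$: one has to verify that the correct spectral gap used in the bound is $s_{\min}(W)$, which requires $\sigma_r(W') \geq s_{\min}(W) - \|W - W'\|_F > 0$ so that no singular value of $W'$ drops into the would-be gap. This is precisely why the estimate is only meaningful for $W'$ sufficiently close to $W$ and why the bound blows up as $s_{\min}(W) \to 0$, encoding the high curvature of $\mathcal M_r$ near the low-rank stratum. The algebraic decomposition of $P_W B - P_{W'} B$ and the norm manipulations are routine once the projection formula and Wedin's bound are in place; all the geometric content sits in the spectral subspace perturbation estimate.
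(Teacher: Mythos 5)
Your argument is sound, but note first that the paper itself does not prove this statement: it is imported verbatim as Lemma~4.2 of the cited dynamical low-rank approximation paper of Koch and Lubich, so the comparison is with that classical proof rather than with anything in the present text. Your decomposition is algebraically correct (the cross terms $U'U'^\top B\,VV^\top$ indeed cancel), and the reduction to bounding $\|UU^\top-U'U'^\top\|_2$ and $\|VV^\top-V'V'^\top\|_2$ follows the same skeleton as the original argument, which also starts from $P_WB=UU^\top B+BVV^\top-UU^\top BVV^\top$. Where you differ is the tool for the subspace perturbation: Koch--Lubich estimate the change of the range projectors directly, essentially via $UU^\top=WW^{+}$ and a pseudo-inverse perturbation bound with $\|W^{+}\|_2=s_{\min}(W)^{-1}$, whereas you invoke Wedin's $\sin\Theta$ theorem; both yield the factor $s_{\min}(W)^{-1}$, and your route is arguably more off-the-shelf while theirs gives explicit constants (and the explicit closeness radius $\|W-W'\|\lesssim s_{\min}(W)$ that you correctly identify as necessary). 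Two small points to make the write-up airtight: justify the identity $\|UU^\top-U'U'^\top\|_2=\|\sin\Theta(U,U')\|_2$ for equi-dimensional subspaces, which is what lets Wedin control the projector difference, and state the gap condition precisely --- the version of Wedin you need has gap $\sigma_r(W')\geq s_{\min}(W)-\|W-W'\|_F$, so the constant becomes, say, $2/s_{\min}(W)$ once $\|W-W'\|_F\leq s_{\min}(W)/2$, which is exactly the ``arbitrarily close'' proviso in the statement; with $C=2c\,\|B\|_F$ (or $2c\,\|B\|_2$ in the mixed-norm variant) depending only on $B$, the conclusion matches the lemma.
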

In our gradient flow terminology, this phenomenon is shown by the presence of the matrix inversion in \eqref{eq:system_odes_USV}. While this is often an issue that may dramatically affect the performance of low-rank optimizers (see also \Cref{sec:experiments}), the proposed  regularization step that enforces bounded singular values allows us to move along paths that avoid  stiffness points. 
Using this observation, here we provide a bound on the quality of the low-rank neural network computed via \Cref{alg:main_algorithm}, provided there exists an optimal trajectory leading to an approximately low-rank network. We emphasize that this assumption is well-aligned with recent work showing the existence of high-performing low-rank nets in e.g.\ deep linear models \cite{martin2021implicit,feng2022rank,arora2019implicit,huh2021low}.

Assume the training is performed via gradient descent with learning rate $\lambda >0$, and let $W(t)$ be the full gradient flow \eqref{eq:full_gradient_flow}. Further, assume that for $t \in [0,\lambda]$ and a given $\s>0$, for each layer there exists $E(t)$ and $\widetilde{W}(t) \in \mathcal{M}_r \cap  \Sigma_s(\varepsilon)$ such that 
\[
 W(t) = \widetilde{W}(t) + E(t) \, ,
\]
where $s$ is the second moment of the singular values of $W(t)$ and $E(t)$ is a perturbation that has bounded variation in time, namely $\lVert \dot{E}(t) \rVert \leq \eta$. In other words, we assume  there exists a training trajectory that leads to an approximately low-rank weight matrix $W(t)$ with almost constant singular values. 
Because the value $s$ is bounded by construction, the parameter-dependent matrix $\widetilde W(t)$ possesses singular values exhibiting moderate lower bound. Thus, $W(t)$ is far from the stiffness region of \eqref{eq:system_odes_USV} and we obtain the following bound, based on \cite{Lubich_dlra} (proof moved to \Cref{sec:proof_approx} in the supplementary material)

\begin{theorem}\label{thm:approx}
Let $U_kS_k V_k^\top$ be a solution to \eqref{eq:system_odes_USV} computed with $k$ steps of \Cref{alg:main_algorithm}. Assume that
\begin{itemize}[leftmargin=*,noitemsep,topsep=-5pt]
    \item The low-rank initialization $U_0S_0V_0^\top$ coincides with the low-rank approximation $\widetilde{W}(0)$.
    \item The norm of the full gradient is bounded, i.e., $\lVert \nabla_W \mathcal{L}(W(t)) \rVert \leq \mu$.
    \item The learning rate is bounded as $\lambda \leq \frac{s-\varepsilon}{4\sqrt{2 \mu \eta}}$.
\end{itemize}
Then, assuming no numerical errors,  the following error bound holds
\[
\lVert U_kS_kV_k^\top - W(\lambda k) \rVert \leq 3 \lambda \eta  \, .
\]
\end{theorem}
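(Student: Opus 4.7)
The plan is to split the error $\norm{U_k S_k V_k^\top - W(k\lambda)}$ into a continuous-time low-rank modelling error (comparing the full gradient flow $W(t)$ against the reference trajectory $\widetilde W(t)\in \mathcal M_r\cap\Sigma_s(\s)$) and a discretization error (comparing the iterates of \Cref{alg:main_algorithm} against $\widetilde W(k\lambda)$), and then recombine the two via the triangle inequality. Both contributions are controlled through a Grönwall-type argument that crucially exploits the lower bound $\sigma_{\min}(\widetilde W(t))\ge s-\s>0$ enforced by the $\Sigma_s(\s)$ constraint, which is precisely what makes the curvature lemma above applicable with a uniform constant.

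The central analytical ingredient, playing the role of the modelling error in the dynamical low-rank framework of \cite{Lubich_dlra}, is that $-\nabla_W \mathcal L(W(t))$ is $O(\eta)$-close to the tangent space $T_{\widetilde W(t)}\mathcal M_r$. Indeed, since $\widetilde W(t)\in \mathcal M_r$ its time derivative is tangential, so $P_{\widetilde W}^{\perp}\dot{\widetilde W}=0$, and combining $W = \widetilde W + E$ with $\dot W = -\nabla_W \mathcal L(W)$ gives
\[
P_{\widetilde W}^{\perp}\bigl(-\nabla_W \mathcal L(W)\bigr) \;=\; P_{\widetilde W}^{\perp}\dot E, \qquad \norm{P_{\widetilde W}^{\perp}\bigl(-\nabla_W \mathcal L(W)\bigr)} \leq \eta.
\]
Paired with the curvature lemma stated just above---whose Lipschitz constant $C/\sigma_{\min}$ is now uniformly bounded by $C/(s-\s)$ along the trajectory---a standard Grönwall argument applied to the projected continuous-time flow $\dot Y = -P_Y\nabla_W \mathcal L(Y)$ started at $Y(0)=\widetilde W(0)$ yields $\norm{Y(t) - \widetilde W(t)} \lesssim \eta t$, and consequently $\norm{Y(t) - W(t)} \le \norm{Y(t)-\widetilde W(t)} + \norm{E(t)} \lesssim \eta t$ on the window $[0,\lambda]$, using $\norm{E(t)}\le\int_0^t\norm{\dot E}\,dt\le\eta t$.

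I would then pass from the continuous projected flow to the discrete iterates of \Cref{alg:main_algorithm}. Each iteration amounts to one Euler step of each of the three flows in \eqref{eq:system_odes_USV}, followed by QR retractions on $U, V$ and an SVD-based trimming of $S$ onto $\Sigma_s(\s)$. Following the local-truncation analysis of unconventional integrators for dynamical low-rank approximation, the Euler step introduces an error of order $\lambda^2\mu^2/(s-\s)$, where the $(s-\s)^{-1}$ factor tracks the $(SS^\top)^{-1}$ and $(S^\top S)^{-\top}$ coefficients in \eqref{eq:system_odes_USV}. The $\Sigma_s(\s)$ trimming perturbs $S$ by at most $O(\lambda\eta)$, since the continuous reference $\widetilde W(t)$ already lies inside the tube and the Euler update cannot push the singular values of $S$ outside it by more than the modelling error. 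Collecting the three contributions produces an estimate of the form
\[
\norm{U_k S_k V_k^\top - W(k\lambda)} \;\le\; C_1\,\lambda\eta \;+\; C_2\,\frac{\lambda^2\mu^2}{s-\s}\, ,
\]
and the learning-rate hypothesis $\lambda\le (s-\s)/(4\sqrt{2\mu\eta})$ is calibrated precisely so that the second term is absorbed into a fraction of the first, collapsing the bound into $3\lambda\eta$.

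The main obstacle I expect lies in the careful bookkeeping of the three sequential updates of $U, S, V$ within one step of \Cref{alg:main_algorithm}. Each update changes the base point on $\mathcal M_r$, so the orthogonal projectors $P_U^\perp$ and $P_V^\perp$ appearing in \eqref{eq:system_odes_USV} are evaluated at slightly mismatched points, and tracking how this mismatch propagates into the $S$-equation---whose coefficients involve $(SS^\top)^{-1}$---requires the curvature lemma coupled with the uniform lower bound $\sigma_{\min}(S_k)\ge s-\s$ guaranteed by the $\Sigma_s(\s)$ projection step. This is precisely where the $\Sigma_s(\s)$ constraint does double duty: it simultaneously enforces the conditioning needed for the robustness claim and keeps the low-rank iteration away from the stiff region of $\mathcal M_r$, making all the above estimates uniform along the trajectory.
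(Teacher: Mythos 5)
The continuous-time half of your argument is essentially the paper's: the proof compares the exact solution $Y(t)$ of \eqref{eq:system_odes_USV} with $\widetilde W(t)$ via the decomposition $W=\widetilde W+E$ and the curvature lemma, and then adds $\norm{E(t)}\le \eta t$ by the triangle inequality to get $3t\eta$. Note, however, that ``assuming no numerical errors'' means the discrete iterates are identified with the exact projected flow on each window of length $\lambda$, so the entire discretization analysis you sketch (Euler local truncation, retraction and trimming errors, the bookkeeping of the three sequential factor updates) is outside the scope of the statement and does not appear in the paper's proof.

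The genuine gap is in your handling of the step-size hypothesis. You describe the estimate for $\norm{Y-\widetilde W}$ as ``a standard Gr\"onwall argument'' and then assign the condition $\lambda\le (s-\s)/(4\sqrt{2\mu\eta})$ to absorbing the discretization term. In fact the curvature lemma produces a \emph{quadratic} error inequality: with $\gamma=32\mu(s-\s)^{-2}$ one obtains $\dot e\le \gamma e^2+\eta$ with $e(0)=0$, whose comparison solution is $\sqrt{\eta/\gamma}\,\tan\bigl(t\sqrt{\eta\gamma}\bigr)$; this is bounded by $2t\eta$ only when $t\sqrt{\gamma\eta}\le 1$, and $\lambda\sqrt{\gamma\eta}=4\sqrt{2\mu\eta}\,\lambda/(s-\s)\le 1$ is exactly the stated learning-rate bound. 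A linear Gr\"onwall estimate would instead give an exponential factor and would not yield the constant $3$, so without recognizing that the step-size condition is what tames the Riccati blow-up, your argument cannot close. Moreover, your claimed absorption of $C_2\lambda^2\mu^2/(s-\s)$ into a fraction of $\lambda\eta$ does not follow from the stated hypothesis: it would require $\lambda\lesssim \eta(s-\s)/\mu^2$, which is a different condition.
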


\section{Experiments}\label{sec:experiments}
We illustrate the performance of \Cref{alg:main_algorithm} on a variety of test cases. All the experiments can be reproduced with the code available in  the supplementary material. In order to assess the combined compression and robustness performance of the proposed method, we compare it against both full  and low-rank baselines.

For all models, we compute natural accuracy and robust accuracy. Let $\{(x_{i}, y_{i})\}_{i=1,\dots,n}$ be the set of test images and the corresponding labels and let $f$ be the neural network model, with output $f(x)$ on the input $x$. We  quantify the test set robust accuracy as:
\begin{equation*}
\textstyle{\mathrm{robust\_acc}(\delta) = \frac{1}{n} \sum_{i=1}^{n}\mathbbm{1}_{\{y_{i}\}}(f(x_i+\delta_i)) }
\end{equation*}
where  $\delta = (\delta_i)_{i=1,\dots,n}$ are the adversarial perturbation associated to each sample.
Notice that, in the unperturbed case with $ \|\delta_i\|=0$, the definition of robust accuracy exactly coincides with the definition of test accuracy. In our experiments,  adversarial perturbations are produced by the fast gradient sign method \cite{goodfellow2015explaining}, thus they are of the form $\delta_{i} = \epsilon \, \mathrm{sign}(\nabla_x \mathcal{L}(f(x_{i}), y_{i}))$,  
 where $\epsilon$ controls perturbation strength, since $\|\delta_{i}\|_\infty =\epsilon$.
As images in our set-up have input entries in $[0,1]$, the perturbed input is then clamped to that interval. Note that, for the same reason, the value of $\epsilon$ controls in our case the relative size of the perturbation.
 
 \textbf{Datasets.} We consider MNIST and CIFAR10 \cite{krizhevsky2009learning} datasets for evaluation purposes. The first contains 60,000 training images and the last one contains 50,000  training images. All the datasets have 10,000 test images and 10 classes. No data-augmentation is performed.
 
 \textbf{Models.} We use LeNet5 \cite{lecun1998gradient} for MNIST  dataset and VGG16 \cite{simonyan2014very} for the CIFAR10.
 The general architecture for all the used networks is preserved across the models, while the weight-storing structures and optimization frameworks differ. 
 
 \textbf{Methods.}
 Our baseline network is the one done with the standard implementation. 
Cayley SGD \cite{li2020efficient} and Projected SGD \cite{absil2012projection} are Riemannian optimization-based methods that train the network weights over the Stiefel manifold of matrices with orthonormal columns. Thus, both methods ensure $\cond_2(W)=1$ for all layers. The former uses an iterative estimation of the Cayley transform, while the latter uses QR-based projection  to retract the Riemannian gradient onto the Stiefel manifold. Both methods have no compression and use full weight matrices. 
DLRT, SVD prune, and Vanilla are low-rank methods that ensure compression of the model parameters during training. DLRT \cite{DLRT} is based on a low-rank gradient flow model similar to the proposed \Cref{alg:main_algorithm}. SVD prune \cite{SVD_prune} is based on a regularized loss with a low-rank-promoting penalty term. This approach was designed to compress  the ranks of the network after training, but we impose a fixed compression rate from the beginning for a fair comparison.  ``Vanilla'' denotes the obvious low-rank approach, that  parametrizes the layers as $W=UV^\top$ and performs alternate descent steps over $U$ and $V$, as done in e.g.\ \cite{wang2021pufferfish,khodak2021initialization}. All models are implemented with fixed training compression ratios $\alpha=0.5$ and $\alpha=0.8$, i.e.\ we only use $50\%$ and $20\%$ of the parameters the full model would use during training,  respectively. 
Finally, we implement \ALGNAME{} as in \Cref{alg:main_algorithm} for three choices of the conditioning tolerance $\tau\in\{0,0.1,0.5\}$. We also implement a modified version of \Cref{alg:main_algorithm} in which $S$ is directly projected onto the Stiefel manifold $\Sigma_1(0)$, i.e.\ the parameter $s$ is fixed to one.

\textbf{Training.} 
Each method and model was trained for 120 epochs of stochastic gradient descent with a minibatch size of 128. We used a learning rate of 0.1 for LeNet5 and 0.05 for VGG16 with momentum 0.3 and 0.45, respectively, and a learning rate scheduler with factor = 0.4  at 70 and  100 epochs.

\textbf{Results.} For comparison, we measure robust accuracy for all chosen combinations of datasets, models, and methods with relative perturbation budget $\epsilon \in \{0, 0.01, 0.02, 0.03, 0.04, 0.05, 0.06\}$  for MNIST  and  $\epsilon \in \{0, 0.001, 0.002, 0.003, 0.004, 0.005, 0.006\}$ for CIFAR10. 
The results are summarized in Table \ref{table:main} for MNIST and CIFAR10 and a subset of the perturbation budget. The complete set of results is shown in Tables \ref{tab_SLRN_mnist}--\ref{tab_SLRN_cif3} in the supplementary material. In the tables, we highlight (in gray) the best-performing method for each range of compression rates $\alpha\in\{0\%,50\%,80\%\}$, and the best method overall (in bold). The experiments show that the proposed method meaningfully conserves the accuracy as compared to baseline for all datasets; moreover, the robustness is improved from baseline for $ 50 \% $ compression rate and additionally in case of MNIST for $ 80 \% $ compression rate. Compared to orthogonal non-compressed methods, \ALGNAME with $ 50 \% $ compression rate also outperforms Cayley SGD and Projected SGD.

At the same time \ALGNAME is compressed by design, so we reduce memory costs alongside gaining robustness. 
As anticipated in \Cref{sec:instability}, low-rank methods without regularization deteriorates the condition number and thus the robustness of the model. This is consistent with the results of \Cref{table:main}, where we observe that compression without regularization does not work well in terms of robust accuracy, in particular DLRT, Vanilla and SVD prune exhibit a drop in the performance in comparison with baseline and, consequently, \ALGNAME. 
Specifically, per each fixed compression ratio $ \alpha $ our method exhibits the highest robustness; moreover, the robustness of \ALGNAME with $\alpha =0.8$ is higher than the robustness of any other method with an even lower compression ratio, $ \alpha =0.5$.

\def\inrule{ \specialrule{.05pt}{2pt}{2pt} }
\setlength{\fboxsep}{1pt}
\def\first#1{\fcolorbox{black}{black!15}{\textbf{#1}}}
\def\second#1{\fcolorbox{black!50}{black!5}{#1}}
\setlength{\tabcolsep}{2.5pt}
\begin{table*}[t]
\caption{Method comparison results}
\label{table:main}
\centering
\begin{scriptsize}
\resizebox{\textwidth}{!}{
\begin{NiceTabular}{ll c  cccc @{\extracolsep{1pt}}p{1em} cccc @{\extracolsep{0pt}}p{.0em} c}
\toprule
 & & & \multicolumn{4}{c}{LeNet5  MNIST}&  & \multicolumn{4}{c}{VGG16 Cifar10} & &\multirow{2}{*}{$\begin{array}{c}\text{c.r.} \\ \text{(\%)}\end{array}$}\\
 \cmidrule{4-7} \cmidrule{9-12} 
\multicolumn{2}{c}{Rel.\ perturbation $\epsilon$} & & 0.0 & 0.02 & 0.04 & 0.06 & & 0.0 & 0.002 & 0.004 & 0.006 & &  \\
\midrule
\multicolumn{2}{c}{Baseline} & &  \first{0.9908}  & \second{0.9815}  & 0.9647 & 0.9328 & & \second{0.9087} & \second{0.7515} & \second{0.5847} & \second{0.4640}   & & 0\\
\multicolumn{2}{c}{Cayley SGD} & &  0.9872  & 0.9812  & \second{0.9695} & \second{0.9497}   & &  0.8965 & 0.7435  & 0.5802 & 0.4508 & &0\\
\multicolumn{2}{c}{Projected SGD} & & 0.9665  & 0.9047 & 0.7908 & 0.6260 & &    0.8584 & 0.6967 & 0.5402 & 0.4124  & & 0\\
\midrule\\[-1.62em]
\midrule
\multirow{4}{*}{\rotatebox[origin=c]{90}{\ALGNAME{}}} & $\tau = 0$ & & 0.9884 &  0.9819  & 0.9717 & 0.9582 & & \first{0.9131} & 0.7457 & 0.5611 &  0.4157 & &50\\
 & $\tau = 0.1$ & & 0.9873 & \first{0.9825} & \first{0.9759} & \first{0.9668} & & 0.9117 & \first{0.7539} & \first{0.6201} & \first{0.5212} & & 50 \\ 
& $\tau = 0.5$ & & 0.9865 & 0.9791 & 0.9726 & 0.9612 & & 0.8999 & 0.7262 &  0.5940 & 0.487 &  & 50\\ 
& mean  & & 0.9606 & 0.9414 & 0.9156 & 0.8773 & & 0.8793 & 0.6846 & 0.4861 &  0.3459 & & 50 \\
\midrule 
\multicolumn{2}{c}{DLRT} & & \second{0.9907} & 0.981 & 0.9608 & 0.9245   & &    0.8367 & 0.6079 & 0.4342 & 0.3233 & & 50\\ 
\multicolumn{2}{c}{Vanilla} & & 0.987 & 0.9748 & 0.9488 & 0.9094 & & 0.8995 &  0.6801 & 0.4917 & 0.3856 & & 50\\
\multicolumn{2}{c}{SVD prune} & & 0.9876 & 0.9718 & 0.9458 & 0.8966 & & 0.8981  & 0.6750 & 0.4753 & 0.3737  & & 50\\
\midrule\\[-1.62em]
\midrule
\multirow{4}{*}{\rotatebox[origin=c]{90}{\ALGNAME{}}} & $\tau = 0$ & & 0.9881 &  \second{0.9802} & \second{0.9694} & \second{0.9505} & & 0.9054 & \second{0.7476} &  \second{0.5718} & \second{0.4237} & & 80\\
 & $\tau = 0.1$ & & 0.9877 & 0.9794 & 0.9673 & 0.9477 & &  \second{0.9063} & 0.7079 &  0.5115 & 0.3760 & & 80\\ 
& $\tau = 0.5$ & & 0.9858 & 0.9754 & 0.9585 & 0.9288 & & 0.8884 & 0.6936 &  0.5082 & 0.3787  & & 80\\ 
& mean & & 0.9816 & 0.9730 & 0.9597 &  0.9395  & & 0.8698 & 0.6627 & 0.4659 &  0.3213 & & 80\\
\midrule 
\multicolumn{2}{c}{DLRT} & & \second{0.9888} & 0.9747 & 0.9503 & 0.9067 & & 0.8438 & 0.6178 & 0.4184 & 0.2883 & & 80\\ 
\multicolumn{2}{c}{Vanilla} & &   0.9839 & 0.9660 & 0.9314 & 0.8687  & & 0.8806  & 0.6540 & 0.4505 & 0.3234 & &80\\
\multicolumn{2}{c}{SVD prune} & & 0.9847 & 0.9650 & 0.9261 & 0.8513 & & 0.8845  & 0.6355 & 0.4195 & 0.3068 & &80\\
 \bottomrule
\end{NiceTabular}
}
\end{scriptsize}
\vskip -.5em
\end{table*}
\begin{figure}[t!]
    \centering
\includegraphics[width = 1\textwidth]{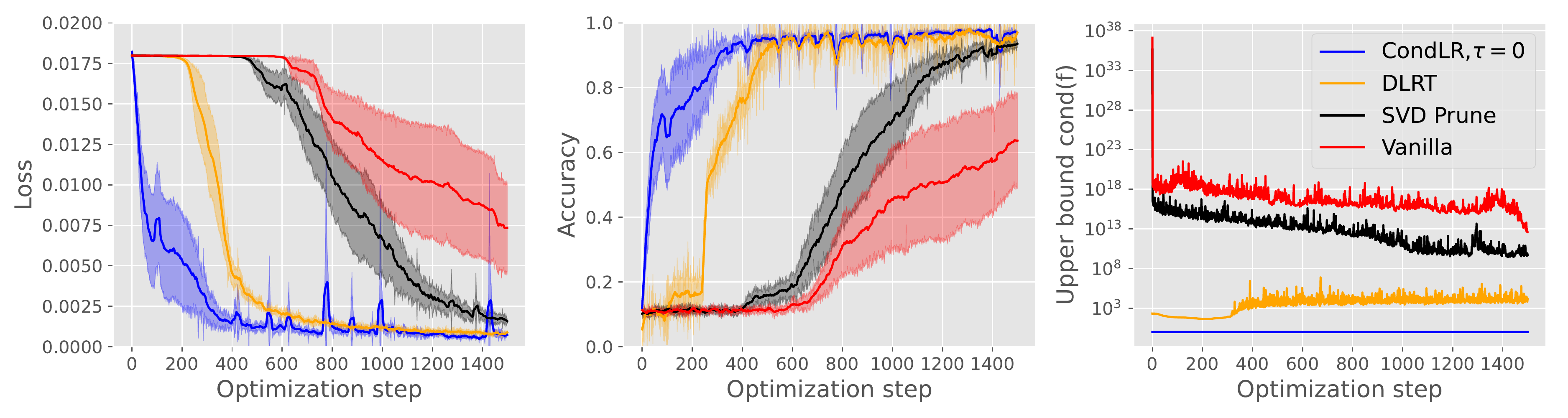}
    \caption{Evolution of loss, accuracy, and $\prod_{i}\cond(W_{i})$ for Lenet5 on MNIST dataset, for ill-conditioned initial layers whose singular values are forced to decay exponentially with powers of two.\vspace{-1em}}
    \label{fig:convergence}
\end{figure}

As a further experimental evaluation, we analyze  the robustness of low-rank training models with respect to small singular values. 
As shown by \Cref{thm:approx}, \ALGNAME{} is not affected by the steep curvature of the low-rank manifold $\mathcal{M}_r$, i.e.\ accuracy and  convergence rate of \ALGNAME{} do not deteriorate when the conditioning of the weight matrices explodes. In contrast, small singular values may significantly affect  alternative low-rank training models. In \Cref{fig:convergence} we show the behavior of loss, accuracy, and condition number as functions of the iteration steps, when the network is  initialized with ill-conditioned layer matrices. Precisely, similar to what is done in \cite{khodak2021initialization}, we randomly sample Gaussian initial weights, we compute their SVD to define the initial $U$ and $V$ factors, and then force the singular values of the initial $S$ factor to decay exponentially.  We observe that \ALGNAME and DLRT (which is also based on a low-rank gradient flow formulation) are the most robust, while the performance of the other methods deteriorates dramatically. This confirms that, as also observed in \cite{khodak2021initialization,DLRT}, most low-rank approaches require specific fine-tuned choices of the initialization, while the proposed robust low-rank training model ensures solid performance independent of the initialization.


\appendix

\section{Additional results}
In this section, we report the results on natural and robust accuracy for the broader range of perturbation budgets $\epsilon \in \{0, 0.01, 0.02, 0.03, 0.04, 0.05, 0.06\}$  for LeNet5 on MNIST,  and  $\epsilon \in \{0, 0.001, 0.002, 0.003, 0.004, 0.005, 0.006\}$ for VGG16 on CIFAR10. See \Cref{tab_SLRN_mnist} and \Cref{tab_SLRN_cif3}. The results confirm the same findings as the ones reported in the previous sections.

\begin{table}[b!]
\caption{LeNet5 MNIST} \label{tab_SLRN_mnist}
\centering
\def\inrule{ \specialrule{.05pt}{2pt}{2pt} }
\setlength{\fboxsep}{1pt}
\def\first#1{\fcolorbox{black}{black!15}{\textbf{#1}}}
\def\second#1{\fcolorbox{black!50}{black!5}{#1}}
\setlength{\tabcolsep}{2.5pt}
\centering

\begin{NiceTabular}{llcccccccc}
\toprule   
 \multicolumn{2}{c}{Rel.\ perturbation $\epsilon$} &  0.0 & 0.01 & 0.02 & 0.03 & 0.04 & 0.05 & 0.06  & cr (\%)\\

\midrule
	\multicolumn{2}{c}{Baseline} & \first{0.9908} & \first{0.9866} & \second{0.9815} & 0.9751 & 0.9647 & 0.9512 & 0.9328  &  $0$ \\
        \multicolumn{2}{c}{Cayley SGD} & 0.9872 & 0.9844 & 0.9812 & \second{0.9767} & \second{0.9695} & \second{0.9600} & \second{0.9497}  &  $0$ \\
        \multicolumn{2}{c}{Projected SGD} & 0.9665 & 0.9409 & 0.9047 & 0.8520 & 0.7908 & 0.7111 & 0.6260  &  $0$ \\
\midrule\\[-1.35em]
\midrule
        \multirow{4}{*}{\rotatebox[origin=c]{90}{\ALGNAME{}}} &$ \tau = 0.0$ & 0.9884 & 0.9850 & 0.9819 & 0.9772 & 0.9717 & 0.9665 & 0.9582  & $50$\\
	&$\tau = 0.1$ & 0.9873 & 0.9848 & \first{0.9825} & \first{0.9795} & \first{0.9759} & \first{0.9719} & \first{0.9668} & $50$ \\
	&$\tau = 0.5$ & 0.9865 & 0.9827 & 0.9791 & 0.9764 & 0.9726 & 0.9682 & 0.9612 & $50$ \\

        &$\text{mean}$ & 0.9868 & 0.9837 & 0.9809 & 0.9781 &  0.9728 & 0.9681 & 0.9614  & $50$ \\
\midrule
        \multicolumn{2}{c}{DLRT} &   \second{0.9907} & \second{0.9863} & 0.9810 & 0.9716 & 0.9608 & 0.9454 & 0.9245 & $50$ \\
        \multicolumn{2}{c}{vanilla} &   0.9870 & 0.9816 & 0.9748 & 0.9621 & 0.9488 & 0.9307 & 0.9094   & $50$ \\ 
        \multicolumn{2}{c}{SVD prune} &   0.9876 & 0.9803 & 0.9718 & 0.9625 & 0.9458 & 0.9235 & 0.8966   & $50$ \\
\midrule\\[-1.35em]
\midrule
        \multirow{4}{*}{\rotatebox[origin=c]{90}{\ALGNAME{}}}
        &$ \tau = 0.0$ & 0.9881 & \second{0.9842} & \second{0.9802} & \second{0.9752} & \second{0.9694} & \second{0.9615} & \second{0.9505}  & $80$\\
	&$ \tau = 0.1$ & 0.9877 & 0.9839 & 0.9794 & 0.9739 & 0.9673 & 0.9581 & 0.9477 & $80$ \\
	&$ \tau = 0.5$ & 0.9858 & 0.9812 & 0.9754 & 0.9683 & 0.9585 & 0.9459 & 0.9288  & $80$ \\
        &$\text{mean}$ & 0.9816 & 0.9770 & 0.9730 & 0.9676 & 0.9597 & 0.9491 &  0.9395 & $80$ \\
    
\midrule
        \multicolumn{2}{c}{DLRT} &   \second{0.9888} & 0.9836 & 0.9747 & 0.9646 & 0.9503 & 0.9317 & 0.9067  & $80$ \\
        \multicolumn{2}{c}{vanilla} &   0.9839 & 0.9756 & 0.9660 & 0.9506 & 0.9314 & 0.9033 & 0.8687   & $80$ \\ 
        \multicolumn{2}{c}{SVD prune} &  0.9847 & 0.9760 & 0.9650 & 0.9477 & 0.9261 & 0.8937 & 0.8513   & $80$ \\

\bottomrule
\end{NiceTabular}
\end{table}

\begin{table}[b!]
\caption{VGG16 Cifar10} \label{tab_SLRN_cif3}
\centering
\def\inrule{ \specialrule{.05pt}{2pt}{2pt} }
\setlength{\fboxsep}{1pt}
\def\first#1{\fcolorbox{black}{black!15}{\textbf{#1}}}
\def\second#1{\fcolorbox{black!50}{black!5}{#1}}
\setlength{\tabcolsep}{2.5pt}
\centering

\begin{NiceTabular}{llcccccccc}
\toprule   
 \multicolumn{2}{c}{Rel.\ perturbation $\epsilon$} &  0.0 & 0.001 & 0.002 & 0.003 & 0.004 & 0.005 & 0.006  & cr (\%)\\

\midrule
	\multicolumn{2}{c}{Baseline} & \second{0.9087} & \first{0.8375} & \first{0.7515} & \second{0.6643} & \second{0.5847} & \second{0.5194} & \second{0.4640} &  $0$ \\
        \multicolumn{2}{c}{Cayley SGD} & 0.8965 & 0.8278 & 0.7435 & 0.6612 & 0.5802 & 0.5087 & 0.4508 &  $0$ \\
        \multicolumn{2}{c}{Projected SGD} & 0.8584 & 0.7810 & 0.6967 & 0.6169 & 0.5402 & 0.4743 & 0.4124 &  $0$ \\
\midrule\\[-1.35em]
\midrule
         \multirow{4}{*}{\rotatebox[origin=c]{90}{\ALGNAME{}}}
        &$ \tau = 0.0$ & \first{0.9131} & \second{0.8371} & 0.7457 & 0.6515 & 0.5611 & 0.4848 & 0.4157 & $50$ \\
	&$\tau = 0.1$ & 0.9117 & 0.8331 & \second{0.7539} & \first{0.6883} & \first{0.6201} & \first{0.5656} & \first{0.5212}  & $50$ \\
	&$\tau = 0.5$ & 0.8999 & 0.8180 & 0.7262 & 0.6550 & 0.5940 & 0.5397 & 0.4877 & $50$ \\
    &$\text{mean}$ & 0.8793 & 0.7851 & 0.6846 & 0.5729 & 0.4861 & 0.4116 & 0.3459 & $50$ \\
\midrule
    \multicolumn{2}{c}{DLRT} & 0.8367 & 0.7163 & 0.6079 & 0.5130 & 0.4342 & 0.3702
 & 0.3233 & $50$ \\ 
    \multicolumn{2}{c}{vanilla} & 0.8995 & 0.7991 & 0.6801 & 0.5721 & 0.4917 & 0.4299 & 0.3856  & $50$ \\ 
    \multicolumn{2}{c}{SVD prune} & 0.8981 & 0.7923 & 0.6750 & 0.5662 & 0.4753 & 0.4161 & 0.3737  & $50$ \\ 
\midrule\\[-1.35em]
\midrule
        \multirow{4}{*}{\rotatebox[origin=c]{90}{\ALGNAME{}}}
        &$\tau = 0.0$ & 0.9054  & \second{0.8314} & \second{0.7476} & \second{0.6574} & \second{0.5718} & \second{0.4912} & \second{0.4237} & $80$\\
	&$\tau = 0.1$ & \second{0.9063} & 0.8134 & 0.7079 & 0.6070 & 0.5115 & 0.4330& 0.3760  & $80$ \\
	&$\tau = 0.5$ & 0.8884 & 0.7992 & 0.6936 & 0.5897 & 0.5082 & 0.4377 & 0.3787  & $80$ \\ 
        &$\text{mean}$ & 0.8698 & 0.7719 & 0.6627 & 0.5558 & 0.4659 & 0.3854 & 0.3213 & $80$ \\
\midrule
    \multicolumn{2}{c}{ DLRT} & 0.8438 & 0.7269 & 0.6178 & 0.5103 & 0.4184 & 0.3489
 & 0.2883  & $80$ \\ 
    \multicolumn{2}{c}{vanilla} & 0.8806 & 0.7721 & 0.6540 & 0.5395 & 0.4505 & 0.3767 & 0.3234 & $80$ \\ 
    \multicolumn{2}{c}{SVD prune} & 0.8845 & 0.7646 & 0.6355 & 0.5143 & 0.4195 & 0.3552 & 0.3068 & $80$ \\ 
\bottomrule
\end{NiceTabular}
\end{table}

\section{Proof of \Cref{thm:bound_cond_f}}\label{sec:proof_bound_cond_f}
\begin{lemma}\label{lemma_2}
        Let $\phi:(Y, \| \cdot \|_{Y}) \rightarrow (Z, \| \cdot \|_{Z})$ and $\psi: (X, \| \cdot \|_{X})\rightarrow (Y,\| \cdot \|_{Y})$ be continuous mappings between finite-dimensional Banach spaces, and assume $\cond(\psi)$ and $\cond(\phi)$ are well defined. Then the following inequality holds:
        \[
        \cond(\phi \circ \psi) \leq \cond(\phi)\cond(\psi)
        \]
\end{lemma}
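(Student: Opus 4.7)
The plan is to derive a pointwise multiplicative identity for the relative error ratio $R(f,x;\delta)$ defined in \eqref{eq:relaltive_error}, and then carefully pass to the limit and supremum using continuity of $\psi$. Throughout I fix a point $x$ such that $\psi(x)\neq 0$ and $\phi(\psi(x))\neq 0$, so that the relevant condition numbers are well defined, and I set $y=\psi(x)$ and $\eta=\eta_\delta=\psi(x+\delta)-y$.

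The first step is a direct algebraic check: a short computation shows
\[
R(\phi\circ\psi,x;\delta) = R(\phi,\psi(x);\eta_\delta)\cdot R(\psi,x;\delta),
\]
because the factors $\|\psi(x)\|$ and $\|\eta_\delta\|$ appearing in the denominator of one ratio and the numerator of the other cancel, leaving exactly the relative error ratio of the composition. In the degenerate situation $\eta_\delta=0$, both sides vanish, so the identity extends trivially.

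The second step exploits continuity of $\psi$ at $x$: given any $\varepsilon'>0$, there is $\varepsilon>0$ such that $\|\delta\|\leq \varepsilon\|x\|$ forces $\|\eta_\delta\|\leq \varepsilon'\|\psi(x)\|$. Because both factors in the identity are nonnegative, the supremum of the product is bounded by the product of the suprema, and the shrinking of $\eta_\delta$ lets me replace the image set by a ball around the origin:
\[
\sup_{0<\|\delta\|\leq \varepsilon\|x\|} R(\phi\circ\psi,x;\delta) \;\leq\; \Bigl(\sup_{0<\|\eta\|\leq \varepsilon'\|\psi(x)\|}\! R(\phi,\psi(x);\eta)\Bigr)\cdot\Bigl(\sup_{0<\|\delta\|\leq \varepsilon\|x\|}\! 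R(\psi,x;\delta)\Bigr).
\]
Letting $\varepsilon\downarrow 0$, which forces $\varepsilon'\downarrow 0$, yields the local bound $\cond(\phi\circ\psi;x)\leq \cond(\phi;\psi(x))\cdot\cond(\psi;x)$. Taking the supremum over $x$ and then overestimating $\cond(\phi;\psi(x))$ by $\cond(\phi)$ and $\cond(\psi;x)$ by $\cond(\psi)$ delivers the claimed global inequality.

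The main obstacle is the interchange of sup and limit in step two: one must ensure that as $\delta$ shrinks, the induced perturbation $\eta_\delta$ of $\psi(x)$ shrinks uniformly over the admissible set of $\delta$'s. This is where finite-dimensionality and continuity of $\psi$ (hence uniform continuity on the compact ball $\|\delta\|\leq \varepsilon\|x\|$) are essential. All remaining steps are routine bookkeeping with the definitions.
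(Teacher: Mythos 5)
Your proof is correct and follows essentially the same route as the paper's: factor the relative error ratio of the composition as $R(\phi\circ\psi,x;\delta)=R(\phi,\psi(x);\eta_\delta)\,R(\psi,x;\delta)$ with $\eta_\delta=\psi(x+\delta)-\psi(x)$, bound the supremum of the product by the product of suprema, pass to the limit $\varepsilon\downarrow0$, and then take the supremum over $x$. If anything, your explicit use of continuity to force $\|\eta_\delta\|\leq\varepsilon'\|\psi(x)\|$ (only continuity of $\psi$ at $x$ is needed, not uniform continuity) is slightly more careful than the paper's corresponding step, which tacitly replaces the set of induced perturbations by a ball of the same radius~$\varepsilon$.
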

\begin{proof}
To prove this lemma, we will pass through the definition of $R(\phi \circ \psi,x,\delta)$. Let us recall that it is defined as
\[
R(\phi\circ \psi,x,\delta)  = \frac{\|\phi\circ \psi(x+\delta)-\phi\circ \psi(x) \|_{Z}\|x \|_{X}}{\| \delta\|_{X} \|\phi\circ \psi(x) \|_{Z}}
\]
Now, if $\psi(x+\delta)-\psi(x) = 0$ for $\delta\ne 0$, then $R(\phi \circ \psi,x,\delta) = 0$. Since we're interested in the supremum, we can restrict to $\delta$ such that $\psi(x+\delta)-\psi(x)\ne 0$. Moreover, by multiplying and dividing by $\| \psi(x)\|_{Y}$ we obtain: 
\[
R(\phi\circ \psi,x,\delta)  =  \Bigl(\frac{\|\phi\circ \psi(x+\delta)-\phi\circ \psi(x) \|_{Z} \|\psi(x) \|_{Y}}{\|\psi(x+\delta)-\psi(x)\|_{Y} \|\phi \circ \psi(x) \|_{Z}} \Bigr)\Bigl(\frac{\|\psi(x+\delta)-\psi(x)\|_{Y} \| x\|_{X}}{\|\delta \|_{X} \| \psi(x) \|_{Y}} \Bigr)
\]
Now, if we define $\cond(f,x,\varepsilon) = \underset{\delta\ne0: \|\delta \|_{X}\leq \varepsilon}{\sup}R(f,x,\delta)$, we take the supremum both sides on the set $\{ \delta \in X\,|\, \| \delta \|_{X} \leq \varepsilon \}$ and we can upper bound it with the product of the suprema of the two blocks
\[
\underset{\delta\ne0: \|\delta \|_{X}\leq \varepsilon}{\sup}R(\phi \,\circ\, \psi,x,\delta)  =  \cond(\psi,x,\varepsilon)\underset{\delta\ne0: \|\delta \|_{X}\leq \varepsilon}{\sup}\Bigl(\frac{\|\phi\circ \psi(x+\delta)-\phi\circ \psi(x) \|_{Z} \|\psi(x) \|_{Y}}{\|\psi(x+\delta)-\psi(x)\|_{Y} \|\phi \circ \psi(x) \|_{Z}} \Bigr) = \star
\]
Now, letting $\eta=\psi(x+\delta)-\psi(\delta)$, we can rewrite the second part of the last equation as a sort of restriction of $\cond(\phi,\psi(x),\varepsilon)$ to the particular set of perturbation directions of the form $\eta$.
Thus, we can lower-bound it as:
\begin{equation*}
\begin{aligned}
\star&\leq \cond(\psi,x,\varepsilon)\underset{\eta\ne0: \|\eta \|_{Y}\leq \varepsilon}{\sup}\Bigl(\frac{\|\phi(\psi(x)+\eta)-\phi(\psi(x)) \|_{Z} \|\psi(x) \|_{Y}}{\|\eta\|_{Y} \|\phi( \psi(x)) \|_{Z}} \Bigr)= \\&= \cond(\psi,x,\varepsilon)\cond(\phi,\psi(x),\varepsilon)
\end{aligned}
\end{equation*}
By hypothesis, $\cond(\phi)$ and $\cond(\psi)$ are finite, so we can take the limit $\varepsilon\downarrow0$ to get:
\[
\cond(\phi \circ \psi,x) \leq \cond(\phi,\psi(x))\cond(\psi,x)
\]
Finally, taking the supremum over $x$ on both members of the last equation, we can upper bound it with the original condition number without constraint on the directions:
\[
\cond(\phi \circ \psi) \leq \underset{x}{\sup}\cond(\phi,\psi(x))\cond(\psi,x) \leq \cond(\phi)\cond(\psi)
\]
and thus conclude.
\end{proof}
Now, the proof of \Cref{thm:bound_cond_f} follows directly by applying the Lemma above recursively. 
\begin{proof}[Proof of Proposition\ref{thm:bound_cond_f}]
By defining $T_{i}(z) = W_{i}z$, we can write the neural network $f$ as 
\[
f(x) = (\sigma_{L} \circ T_{L} \circ \sigma_{L-1} \circ \dots \circ T_{1})(x) 
\]
for nonlinear activation functions $\sigma_i$. Thus, for $L = 1$, the thesis follows directly from Lemma\ref{lemma_2}, as long as $\cond(\sigma_1)\leq C<\infty$.
    For $L>1$, one can unwrap the compositional structure of $f$ from the left, defining $\phi(z) = (\sigma_{L} \circ T_L) (z)$ and $\psi(x) =  (\sigma_{L-1} \circ \dots \circ T_{1})(x)$.
Then by using Lemma\ref{lemma_2} we have that
\[
\cond(f) \leq \cond(\phi) \cond(\psi) \leq \cond(\sigma_L) \cond(T_L) \cond(\psi).
\]
Now, since $\psi$ is a network of depth $L-1$, we can proceed inductively to obtain that $\cond(f)\leq C \prod_i \cond(T_i)$, with $ C = \Pi_{i=1}^L \cond(\sigma_i) $, and we conclude.
\end{proof}

Note that the result above is of interest only if $\cond(\sigma) = \sup_{x\in\mathcal X}\cond(\sigma;x)<\infty$. When $\sigma$ is Lipschitz, using the formula
\[
\cond(f;x) = \sup_{\nu_x\in \partial \sigma(x)}\|\nu_x
\| \|x\|\|\sigma(x)\|^{-1},
\]
with $\partial$ being the Clarke's generalized gradient operator \cite{clarke1990optimization}, 
we observe below that this is the case for a broad list of activation functions $\sigma$ and feature spaces $\mathcal X$. 
\begin{itemize}[leftmargin=*]
    \item \textbf{LeakyReLU.} For $x\in \mathbb R$,$\alpha>0$, let  $\sigma(x) = \max\{0,x\}+\alpha \min\{0,x\}$. Then any $\nu_x \in \partial \sigma(x)$ is such that $\nu_x=1$ if $x>0$; $\nu_x = \alpha$ if $x<0$; $\nu_x=[\min(\alpha,1),\max(\alpha,1)]$ otherwise. Thus 
    \[
    \cond(\sigma) = \sup_{x\neq 0} \cond(\sigma;x) =  \sup_{x\neq 0}\sup_{\beta\in\partial \sigma(x)} \frac{|x||\mathbf 1_{x>0}+\alpha \mathbf{1}_{x< 0}+\beta\mathbf 1_{x=0}| }{|\max\{0,x\}+\alpha\min\{0,x\}|}= \max(\alpha,1)
    \]
    
    \item \textbf{Tanh.} For $x\in \mathbb{R}$, let $\sigma(x) = \tanh(x)$. Then $\sigma'(x) = \frac{1}{\cosh^{2}(x)}$ and thus
    \[
    \cond(\sigma) = \underset{x}{\sup}\frac{|x|}{|\tanh(x)||\cosh^{2}(x)|} = \underset{x}{\sup}\frac{|4x|}{|e^{x}-e^{-x}||e^{x}+e^{-x}|}=1
    \]
    Since the maximum of $\cond(\sigma,x)$ is reached at zero, where the function can be extended by continuity.
    \item \textbf{Hardtanh.} For $x \in [-a,a]$ and $a>0$, let $\sigma(x) = a\mathbf 1_{x>a}-a\mathbf 1_{x<-a}+x\mathbf 1_{x \in [-a,a]}$. Then, we have that $\partial \sigma(x)$ coincides with the derivative values in all points but $x = \pm a$. In those two points, we have $\partial \sigma(\pm a) = [0,1]$.
    Thus, for any $\nu_{x} \in \partial \sigma(x)$, we have
    \[
    \cond(\sigma) = \underset{x \in [-a,a]}{\sup} \frac{|\nu_{x}||x|}{|\sigma(x)|} \leq \underset{x \in [-a,a]}{\sup}\frac{|x|}{|\sigma(x)|} = a
    \]
    
    \item \textbf{Logistic sigmoid.} For $x\in \mathbb R$ let $\sigma(x) = (1+e^{-x})^{-1}$. Then $\sigma'(x) = \sigma(x)(1-\sigma(x))$ and thus
    \[\cond(\sigma;x) = |x|(1-\sigma(x))=|x|e^{-x}(1+e^{-x})^{-1}\, .\]
    Therefore, when $x\geq 0$, we have $|x|e^{-x}\leq 1/e$ and $(1+e^{-x})\geq 1$, thus $\cond(\sigma;x)\leq 1/e$. 
    \item \textbf{Softplus.} For $x\in \mathbb R$, let $\sigma(x) = \ln(1+e^x)$. Then $\sigma'(x)= S(x) = (1+e^{-x})^{-1}$ and $\cond(\sigma;x) = |x|S(x)\sigma(x)^{-1}$. Thus, for $x\geq 0$, we have $\cond(\sigma;x)\leq 1$. 
    \item \textbf{SiLU.} For $x\in \mathbb R$ let $\sigma(x) = x(1+e^{-x})^{-1}= xS(x)$. Then, $\sigma'(x) = S(x)+xS(x)(1-S(x))$ and thus for any $x\geq 0$ we have 
    \[
    \cond(\sigma;x) = |1+x(1-S(x))|\leq 1+\frac 1 e
    \]
\end{itemize}

\section{Proof of \Cref{thm:approx}}\label{sec:proof_approx}
In the following,  the proof of the main approximation result is presented. We underline that the core part of the proof relies on \cite[Theorem 5.2]{Lubich_dlra}. For completeness, we repropose here main elements of the argument. We refer the interested reader to \cite{Lubich_dlra} and references therein for further details.

\begin{proof}
    Let $Y(t)$ be the solution of \eqref{eq:system_odes_USV} at time $t \in [0, \lambda]$. First, we observe that the projected subflows of $W(t) = \widetilde{W}(t) + E(t)$ and $\widetilde{W}(t)$ satisfy the differential equations
    \begin{equation*}
        \begin{cases}
        \dot{Y} = P(Y)\dot{\widetilde{W}} + P(Y) \dot{E} \, ,\\
        \dot{\widetilde{W}} = P\left(\widetilde{W}\right) \dot{ \widetilde{W}} \, .
            \end{cases}
    \end{equation*}
    where $P(\cdot)$ denotes the orthogonal projection into the tangent space of the low-rank manifold $\mathcal{M}_r$. Next, we observe that the following identities hold
    \[
        (P(Y) - P(\widetilde{W}))\dot{\widetilde{W}} 
            = -(P^\perp(Y) - P^\perp(\widetilde{W}) )\dot{\widetilde{W}} 
            = -P^\perp(Y)\dot{\widetilde{W}}  
            = -P^\perp(Y)^2\dot{\widetilde{W}} \, .
    \]
    where $P^\perp(\cdot) = I - P(\cdot)$ represents the complementary orthogonal projection. The latter implies that
    \[
        \langle Y-\widetilde{W}, (P(Y) - P(\widetilde{W}))\dot{\widetilde{W}}\rangle
        = \langle P^\perp(Y)(Y-\widetilde{W}), (P(Y) - P(\widetilde{W}))\dot{\widetilde{W}}\rangle \, .
    \]
    Let $\gamma = 32 \mu (s-\varepsilon)^{-2}$. It follows from \cite[Lemma 4.2]{Lubich_dlra} that 
    \begin{align*}
        \langle Y-\widetilde{W}, \dot{Y} - \dot{\widetilde{W}} \rangle 
            &= \langle P^\perp(Y)(Y-\widetilde{W}), (P(Y) - P(\widetilde{W}))\dot{\widetilde{W}}\rangle +  \langle Y-\widetilde{W}, P(Y) \dot{E} \rangle \\
            &\leq  \gamma \lVert Y -\widetilde{W} \rVert^3 + \eta \lVert Y -\widetilde{W} \rVert  \, .
    \end{align*}
    Further, we remind that 
    \[
        \langle Y-\widetilde{W}, \dot{Y}- \dot{\widetilde{W}} \rangle 
            = \frac{1}{2}\frac{d}{dt} \lVert Y-\widetilde{W} \rVert^2
            = \lVert Y-\widetilde{W} \rVert \frac{d}{dt} \lVert Y-\widetilde{W} \rVert \, .
    \]
    Hence, the error $e(t) = \lVert Y(t) - \widetilde{W}(t) \rVert$  satisfies the differential inequality
    \[
    \dot{e} \leq \gamma e^2 + \eta, \quad e(0) = 0 \, .
    \]
    The error $e(t)$ for $t \in [0,\lambda]$ admits an upper bound given by the solution of 
    \[
    \dot{z} = \gamma z^2 + \eta, \quad z(0) = 0 \, .
    \]
    The last differential initial-value problem admits a closed solution given by
    \[
    z(t) = \sqrt{\eta / \gamma} \tan{(t\sqrt{\eta  \gamma)}} \, ,
    \]
    where the last term is bounded by $2t\eta$ for $t\sqrt{\gamma \eta} \leq 1$. The proof thus concludes as follows
    \[
        \lVert Y(t) - W(t) \rVert 
            \leq \lVert Y(t) - \widetilde{W}(t) \rVert + \lVert E(t) \rVert 
            \leq 2 t \eta + t\eta = 3 t \eta   \, ,
    \]
    where the last estimate arise by the integral identity $E(t) = \int_0^t \dot{E}(s) ds$.
\end{proof}


\end{document}